\theoremstyle{plain}
\newtheorem{theorem}{Theorem}[section]
\newtheorem{proposition}{Proposition}[section]
\theoremstyle{definition}
\theoremstyle{remark}
\newcommand{\utkarsh}[1]{}
\newcommand{\pengfei}[1]{}
\newcommand{\ourmethod}[1]{\texttt{PCFM}}
\title{Physics-Constrained Flow Matching: Sampling Generative Models with Hard Constraints}
\author{
Utkarsh \thanks{Equal contribution. Order decided by coin toss.} \\
  Massachusetts Institute of Technology \\
  \And
  Pengfei Cai\footnotemark[1] \\
  Massachusetts Institute of Technology \\
  \And
  Alan Edelman \\
  Massachusetts Institute of Technology \\
  \And
  Rafael Gomez-Bombarelli \footnotemark[2] \\
  Massachusetts Institute of Technology \\
  \And
  Christopher Vincent Rackauckas   \thanks{Corresponding authors: \texttt{rafagb@mit.edu}, \texttt{crackauc@mit.edu}} \\
  Massachusetts Institute of Technology \\
}
\begin{document}

\maketitle

\begin{abstract}
Deep generative models have recently been applied to physical systems governed by partial differential equations (PDEs), offering scalable simulation and uncertainty-aware inference. However, enforcing physical constraints, such as conservation laws (linear and nonlinear) and physical consistencies, remains challenging. Existing methods often rely on soft penalties or architectural biases that fail to guarantee hard constraints. In this work, we propose Physics-Constrained Flow Matching (\ourmethod{}), a zero-shot inference framework that enforces \emph{arbitrary nonlinear constraints} in pretrained flow-based generative models. \ourmethod{} continuously guides the sampling process through physics-based corrections applied to intermediate solution states, while remaining aligned with the learned flow and satisfying physical constraints. Empirically, \ourmethod{} outperforms both unconstrained and constrained baselines on a range of PDEs, including those with shocks, discontinuities, and sharp features, while ensuring exact constraint satisfaction at the final solution. Our method provides a flexible framework for enforcing hard constraints in both scientific and general-purpose generative models, especially in applications where constraint satisfaction is essential.\let\thefootnote\relax\footnotetext{The code implementations are available in \href{https://github.com/cpfpengfei/PCFM}{\color{blue}Python} and \href{https://github.com/utkarsh530/PCFM.jl}{\color{blue}Julia}.}


\end{abstract}

\section{Introduction}
\vspace{-0.5em}
Deep generative modeling provides a powerful and data-efficient framework for learning complex distributions from finite samples. By estimating an unknown data distribution and enabling sample generation via latent-variable models, deep generative methods have achieved state-of-the-art performance in a wide range of domains, including image synthesis \citep{ho2020denoising, song2020score, dhariwal2021diffusion, lipman2022flow}, natural language generation \citep{brown2020language, vaswani2017attention}, and applications in molecular modeling and materials simulation \citep{jumper2021highly,corso2022diffdock, namLiFlow}. 

Inspired by these successes, researchers have begun applying generative modeling to physical systems governed by Partial Differential Equations (PDEs)~\citep{huang2024diffusionpde, kerrigan2023functional,cheng2024, yang2023denoising}. In these settings, generative models offer unique advantages, including efficient sampling, uncertainty quantification, and the capacity to model multimodal solution distributions \citep{huang2024diffusionpde, cheng2024}. However, a fundamental challenge in this context is ensuring that generated samples respect the governing physical constraints of the system \citep{cheng2024, hansenLearning2023}. In traditional domains like vision or text, domain structure is often incorporated through \emph{soft constraints}—classifier guidance \citep{dhariwal2021diffusion}, score conditioning \citep{song2020score}, or architectural priors such as equivariance \citep{cohen2016group}. Manifold-based approaches further constrain generations to lie on known geometric spaces \citep{chen2023flow, gemici2016normalizing, mathieu2020riemannian}.  While such methods can align the model with geometric priors, they cannot be easily adapted for enforcing physical laws in dynamical systems.

Crucially, constraint enforcement in generative modeling for PDEs follows a different paradigm. Physical invariants such as mass, momentum, and energy \citep{evans2022partial, leveque1990numerical} often arise from underlying symmetries  \citep{noether1971invariant}. Prior work to incorporate physics into neural networks has largely relied on \emph{inductive biases} in training and regression-based tasks: encoding conservation laws as soft penalties (e.g., Physics-Informed Neural Networks (PINNs))~\citep{raissi2019physics}, imposing architectural priors~\citep{liFourier2021,luLearning2021}. However, soft constraints can lead to critical failure modes, particularly when exact constraint satisfaction is essential for stability or physical plausibility \citep{wang2021understanding, krishnapriyan2021characterizing, wang2022and}. To address this, recent efforts have explored hard constraint enforcement, through learning conservation laws \citep{richterpowell2022neural, matsubara2022finde}, constraint satisfaction at inference \citep{hansenLearning2023, mouliUsing2024}, and differentiable physics~\citep{negiarLearning2023, greydanus2019hamiltonian,cranmer2020lagrangian}.

Despite this progress, hard constraint enforcement in generative models, particularly for PDEs, remains a nascent area \citep{cheng2024,regenwetter2024constraining}. Enforcing hard constraints in generative models is particularly challenging due to the inherent stochasticity of the sampling process, and the constraints must be satisfied exactly in the final denoised solution but need not be preserved throughout the sampling process. DiffusionPDE \citep{huang2024diffusionpde} and D-Flow \citep{ben2024d} propose gradient-based constraint enforcement during sampling, but these methods often require backpropagation through expensive PDE operators and may fail to exactly satisfy the target constraints. The ECI framework \citep{cheng2024} introduces a novel mixing-based correction process for zero-shot constraint satisfaction, but only empirically evaluates on simple linear constraints and shows limited robustness on sharp, high-gradient PDEs with shocks or discontinuities.

\begin{wrapfigure}[22]{r}{0.5\textwidth}
\centering
\includegraphics[width=\linewidth]{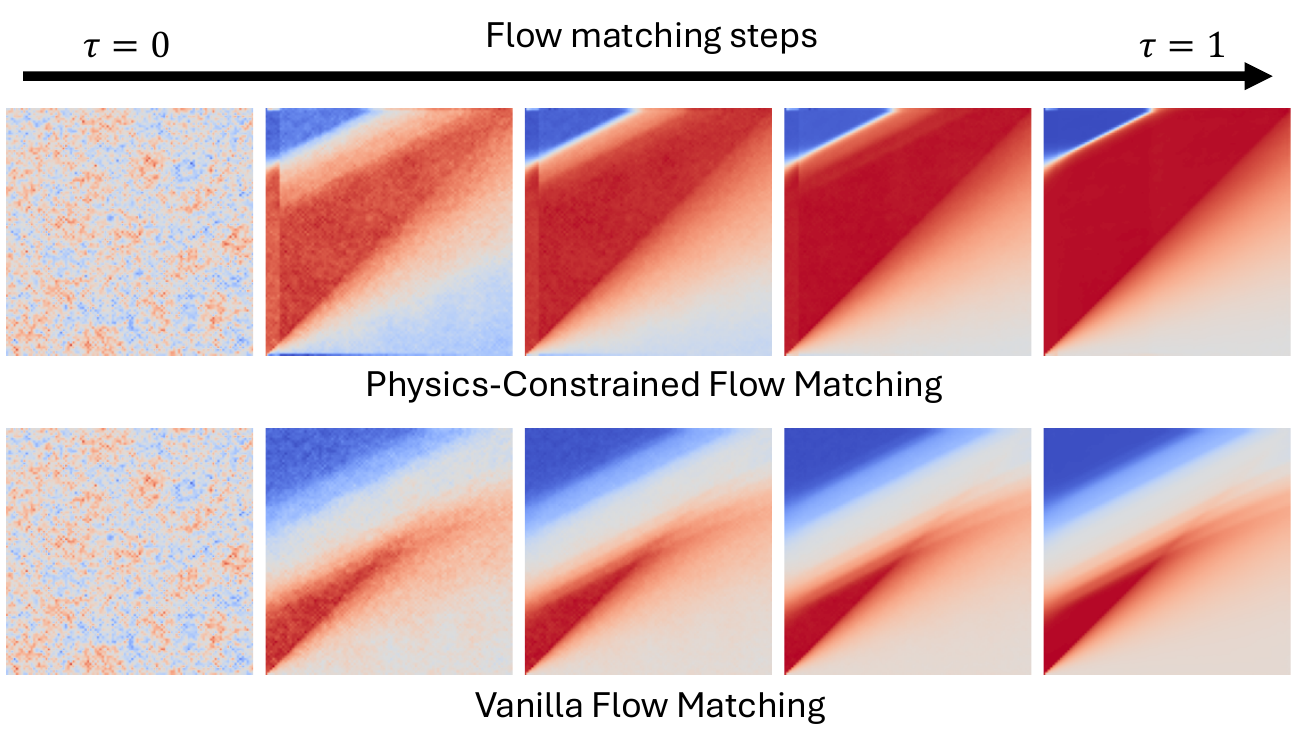}
\caption{\textbf{Evolution of generated solutions for the Burgers equation using vanilla Flow Matching (bottom) and our Physics-Constrained Flow Matching (top).}
Burgers' equation exhibits sharp shock fronts (top left in the figure), which standard FFM fails to capture accurately, resulting in overly smoothed or smeared solutions. In contrast, \ourmethod{} efficiently incorporates physical constraints during sampling, enabling accurate shock resolution and physically consistent final outputs.}
\end{wrapfigure} 

In this work, we introduce \textbf{Physics-Constrained Flow Matching (PCFM)}, a framework that bridges modern generative modeling with classical ideas from numerical PDE solvers. \ourmethod{} enables \emph{zero-shot constraint enforcement up to machine precision} for pretrained flow matching models, projecting intermediate flow states onto constraint manifolds at inference time, without requiring gradient information during training. Unlike prior methods, \ourmethod{} can enforce \emph{arbitrary nonlinear equality constraints}, including global conservation laws, nonlinear residuals, and sharp boundary conditions. It requires no retraining and no architectural modification, operating entirely post-hoc. While we focus on PDE-constrained generation in this work, PCFM provides a flexible framework for enforcing hard constraints for flow-based generative models and may be extended to work for other domains such as molecular design and scientific simulations beyond PDEs. To summarize, we list our contributions as follows:

\begin{enumerate}
    \item We introduce a general framework \ourmethod{} for enforcing \emph{arbitrary and multiple physical constraints} in \textbf{Flow Matching}-based generative models. These constraints include satisfying conservation laws, boundary conditions, or even arbitrary non-linear constraints. Our method enforces these constraints as hard requirements at inference time, without modifying the underlying training objective.

    \item Our approach is \emph{zero-shot}: it operates directly on any pre-trained flow matching model without requiring gradient information for the constraints during training. This makes the method broadly applicable and computationally efficient, especially in scenarios where constraint gradients are expensive or unavailable.

    \item We demonstrate significant improvements in generating solutions to partial differential equations (PDEs), outperforming state-of-the-art baselines by up to \textbf{99.5\%} in standard metrics, such as mean squared error, while ensuring \textbf{zero constraint residual}.

    \item We evaluate our method on challenging PDEs exhibiting \textbf{shocks, discontinuities, and sharp spikes}—settings in which standard flow matching models typically fail. Our approach improves the accuracy of such models \emph{at inference time only}, without the need for retraining or fine-tuning, by retrofitting physical consistency into generated samples.

    \item To enable practical deployment, we develop a custom, \textbf{batched and differentiable solver} that projects intermediate flow states onto the constraint manifold. This solver integrates seamlessly with modern deep learning pipelines and enables end-to-end differentiability through the constraint enforcement mechanism.
\end{enumerate}
\vspace{-5mm}
\section{Related Work}
\vspace{-2mm}
\begin{table}[t]
\centering
\caption{Comparison of generation methods motivated by constraint guidance or enforcement.}
\label{tab:related_work}
\resizebox{1\linewidth}{!}{\begin{tabular}{lcccc}
\toprule
\textbf{} & \textbf{Zero-shot}  & \textbf{Continuous Guidance} & \textbf{Constraint Satisfaction} & \textbf{Complex Constraints} \\
\midrule
Conditional FFM~\cite{kerrigan2023functional} & \ding{55} & \ding{51} & \ding{51} & \ding{55} \\
DiffusionPDE~\citep{huang2024diffusionpde}     & \ding{51} & \ding{55} & \ding{55} & \ding{51} \\
D-Flow~\citep{ben2024d}                        & \ding{51} & \ding{51} & \ding{55} & \ding{51} \\
ECI~\citep{cheng2024}                         & \ding{51} & \ding{51} & \ding{51} & \ding{55} \\
PCFM (Ours)                                   & \ding{51} & \ding{51} & \ding{51} & \ding{51} \\
\bottomrule
\end{tabular}}
\vspace{-5mm}
\end{table}

\paragraph{Flow-based Generative Models.} 
Flow-based generative models \citep{lipman2022flow,liu2022flow,albergo2023stochastic} have emerged as a scalable alternative to diffusion models by defining continuous normalizing flows (CNF) using ordinary differential equations (ODEs) \citep{chen2018neural, albergo2022building, liu2022flow} parameterized by a time-dependent vector field. In flow matching models, samples from a tractable prior distribution are transported to a target distribution via a learned vector field through a simulation-free training of CNFs. Stochastic interpolants provide a unifying framework to bridge deterministic flows and stochastic diffusions, enabling flexible interpolations between distributions \citep{albergo2023stochastic}. Furthermore, methods like rectified flows proposed efficient sampling with fewer integration steps by straightening the flow trajectories \cite{liu2022flow}. Functional flow matching (FFM) \cite{kerrigan2023functional} and Denoising Diffusion Operator (DDO) \citep{lim2023score} extend this paradigm to spatiotemporal data, learning flow models directly over function spaces such as partial differential equations (PDEs) solutions.
\vspace{-2mm}
\paragraph{Constraint Guided Generation.}
Guiding generative models with structural constraints is an upcoming direction used to improve fidelity in physics-based domains \citep{cheng2024, huang2024diffusionpde, shu2023physics}. Constraint information is typically exploited via gradient backpropagation \citep{song2023solving, huang2024diffusionpde, shu2023physics, ben2024d} and has been successively applied to domains such as inverse problems \citep{liu2023flowgrad, wang2024dmplug, chung2022improving}. Gradient backpropagation through an ODE solver can be prohibitively expensive for functional data such as PDEs \citep{gholami2019anode, kim2021stiff, rackauckas2020universal}. Manifold-based flows and diffusion models \citep{chen2023flow, mathieu2020riemannian} capture known geometric priors. However, they are not suitable for PDEs having with data-driven or implicit constraints. For PDE-constrained generation, \textsc{DiffusionPDE}\citep{huang2024diffusionpde} applies PINN-like soft penalties during sampling, while \textsc{D-Flow}\citep{ben2024d} optimizes a noise-conditioned objective. Both approaches incur a high computational cost and offer only approximate constraint satisfaction. In the context of scored-based diffusion models, \textsc{PDM}~\citep{christopher2024constrained} projects every partially denoised intermediate sample during the sampling phase. However, enforcing constraints at every timestep can over-constrain the trajectory, especially under nonlinear or global constraints like ICs or mass conservation that only apply at the final state.   \textsc{ECI}~\citep{cheng2024} introduces a novel gradient-free, zero-shot, and hard constraint method on PDE solutions. However, its empirical evaluation is limited to simple linear and non-overlapping constraints—e.g., pointwise or regional equalities—with known closed-form projections. It lacks a general roadmap for nonlinear or coupled constraints and has limited evaluation on harder-to-solve PDEs with shocks. Furthermore, while labeled “gradient-free”, its reliance on analytical projections restricts extensibility to nonlinear or coupled constraints, and practical enforcement still implicitly relies on nonlinear optimization, which often requires gradient information \citep{bertsekas1997nonlinear}. We summarize our differences compared to generative methods motivated by constraint satisfaction in \Cref{tab:related_work}.
\vspace{-3mm}
\paragraph{Physics-Informed Learning and Constraint-Aware Numerics.}
Physics-informed learning incorporates physical laws as inductive biases in machine learning models, typically through soft penalties as in PINNs \citep{raissi2019physics, karniadakis2021physics} or Neural Operators \citep{liFourier2021, lu2017expressive, cai2024longrollout}. While effective for regression, these methods lack guarantees of hard constraint satisfaction. Hard constraints have been addressed via differentiable layers \citep{donti2021dc3, negiarLearning2023, agrawalDifferentiable2019, utkarsh2025end}, inference-time projections \citep{hansenLearning2023, mouliUsing2024}, and structured architectures \citep{greydanus2019hamiltonian, cranmer2020lagrangian,richterpowell2022neural}. Constraint-aware integration methods offer complementary insights. constrained neural ODEs \citep{lou2020neural, kasim2022constants, matsubara2022finde}, differential algebraic equations \citep{ascher1998computer, petzold1982differential}, and geometric integrators \citep{hairer2006geometric, hairer2006geometric} enforce feasibility through projection-based updates. Though underexplored in generative modeling, these methods motivate principled approaches to constrained sampling. Our method combines this numerical perspective with flow-based generation, enabling exact constraint enforcement without retraining. 
\vspace{-5mm}
\section{Methodology and Setup}
\vspace{-2mm}
\subsection{Problem Setup}
\vspace{-0.5em}

We consider physical systems governed by parameterized conservation laws of the form
\begin{align}
\partial_t u(x,t) + \nabla \cdot \mathcal{F}_{\phi}(u(x,t)) &= 0, && \forall\, x \in \Omega,\ t \in [0,T], \label{eqn:conservation-pde} \\
u(x,0) &= \alpha_0(x), && \forall\, x \in \Omega, \label{eqn:ic} \\
\mathcal{B}u(x,t) &= 0, && \forall\, x \in \partial \Omega,\ t \in [0,T], \label{eqn:bc}
\end{align}
where \( \Omega \subset \mathbb{R}^d \) is a bounded domain, \( u : \Omega \times [0, T] : \mathcal{X} \to \mathbb{R}^{n} \) is the solution field, \( \mathcal{F}_\phi(u) \) is a flux function parameterized by \( \phi \in \Phi \), \( \alpha_0 \) specifies the initial condition, and \( \mathcal{B} \) denotes the boundary operator. For a fixed PDE family and parameter set \( \Phi \), we define the associated solution set \( \mathcal{U}_{\mathcal{F}} := \{ u \in \mathcal{U} : \exists\, \phi \in \Phi \text{ such that } u \text{ satisfies } \eqref{eqn:conservation-pde} - \eqref{eqn:bc} \} \)  representing all physically admissible solutions generated by varying \( \phi \). We assume access to a pretrained generative model like FFM \citep{kerrigan2023functional}, that approximates this solution set, e.g., via a flow-based model trained on simulated PDE solutions.

In addition to the governing equations, we consider a physical constraint which we wish to enforce on the solution $u(x,t)$ through a constraint operator \( \mathcal{H}u(x,t) = 0 \) defined on a subdomain \( \mathcal{X}_{\mathcal{H}} \subseteq \Omega \times [0,T] \), and let
\(
\mathcal{U}_{\mathcal{H}} := \{ u \in \mathcal{U} : \mathcal{H}u(x,t) = 0 \text{ for all } (x,t) \in \mathcal{X}_{\mathcal{H}} \}
\)
denote the constraint-satisfying solution set. Our objective is to generate samples from the intersection
\(
\mathcal{U}_{\mathcal{F} \mid \mathcal{H}} := \mathcal{U}_{\mathcal{F}} \cap \mathcal{U}_{\mathcal{H}},
\)
i.e., functions that satisfy both the PDE and the constraint exactly. Importantly, we seek to impose \( \mathcal{H} \) at inference time, without retraining the pre-trained model, thus narrowing the generative support from \( \mathcal{U}_{\mathcal{F}} \) to \( \mathcal{U}_{\mathcal{F} \mid \mathcal{H}} \) in a zero-shot manner. 


\subsection{Generative Models via Flow-Based Dynamics}
\vspace{-0.5em}

Let \( \mathcal{U} \) denote the space of candidate functions \( u : \mathcal{X} \to \mathbb{R}^n \), where \( \mathcal{X} := \Omega \times [0,T] \) is the spatiotemporal domain of the PDE. We define a family of time-dependent diffeomorphic map \( \phi_\tau : \mathcal{U} \times [0,1] \to \mathcal{U} \) called \emph{flow}, indexed by the flow time \( \tau \in [0,1] \).  A vector field $v_t$ which defines the evolution of the flow \( \phi_\tau \) according to the ordinary differential equation:
\begin{math}
\frac{d}{d\tau} \phi_\tau(u) = v_\tau(\phi_\tau(u)), \quad \phi_0(u_0) = u_0.
\end{math} This yields a continuous path of measures through a push-forward map \( \pi_\tau = (\phi_\tau)_\# \pi_0 \), connecting the prior noise measure $\pi_0$ to the predicted target measure $\pi_1$. \citet{chen2018neural} proposed to learn the vector field $v_t$ with a deep neural network parameterized by $\theta$. Later in this text, we will denote this parameterized vector field as $v_\theta:= v_\tau(x;\theta)$. This naturally induces parameterization for $\phi_t$ called \emph{Continuous Normalizing Flows} (CNFs). \citet{lipman2022flow} introduced a simulation and likelihood free training method for CNFs called \emph{Flow Matching} along with with other similar works \citep{liu2022flow, albergo2022building} (see \Cref{app:pretrain_ffm}).


In the context of functional data such as PDE solutions, the \emph{Functional Flow Matching} (FFM) framework~\citep{kerrigan2023functional} extends this idea to functional spaces. Instead of modeling samples in finite-dimensional Euclidean space, FFM learns flows between functions \( u_0, u_1 \in \mathcal{U} \) via interpolants \citep{lipman2022flow, albergo2023stochastic, liu2022flow}, trained on trajectories with $u_0 \sim \pi_0,\ u_1 \sim \nu$, where $\nu$ is the target probability measure from where wish to sample the PDE solutions. The vector field $v_\theta$ is parameterized by a Neural Operator \citep{liFourier2021, luLearning2021} which is forward compatible to operate on functional spaces. The resulting generative model approximates the target measure \( \pi_1 \approx \nu\) over the support \(\mathcal{U}_{\mathcal{F}} \) through sufficient training (see \Cref{subapp:ffm} for further details). In this work, we focus on augmenting such pre-trained models to enforce additional physical constraints at inference time, without requiring retraining.

\subsubsection{Constraint Types in PDE Systems}
\vspace{-0.5em}

Constraints in PDE-governed physical systems arise from a variety of sources, including boundary conditions, conservation laws \citep{leveque1990numerical, leveque2002finite}, and physical admissibility requirements \citep{rudin1992nonlinear}. These constraints can be local (e.g., pointwise Dirichlet or Neumann conditions) or global (e.g., integral conservation of mass or energy), may be linear or nonlinear in the solution field \( u(x,t) \), and may not be implicitly satisfied by the generated solution. \Cref{tab:constraint-types} summarizes representative forms of commonly encountered constraints, categorized by their mathematical structure. This taxonomy guides the design of appropriate enforcement mechanisms in generative modeling pipelines. A detailed description of the constraints, particularly those in our experiments, is provided in \Cref{A:PCFM_constraints}. 

\begin{table}[t]
\centering
\caption{Summary of constraint types commonly encountered in PDE-based physical systems, categorized by their mathematical form and scope.}
\label{tab:constraint-types}
\resizebox{\linewidth}{!}{
\begin{tabular}{lcc}
\toprule
\textbf{Constraint Type} & \textbf{Representative Form} & \textbf{Linearity} \\
\midrule
Dirichlet IC / BC 
& \( \mathcal{H}u = A u - b = 0 \) 
& Linear \\

Global mass conservation (periodic BCs) 
& \( \mathcal{H}u = \int_\Omega u(x,t)\, dx - C = 0 \) 
& Linear \\

Nonlinear conservation law 
& \( \mathcal{H}u = \frac{d}{dt} \int_\Omega \rho(u(x,t))\, dx = 0 \) 
& Nonlinear \\

Neumann or flux boundary condition 
& \( \mathcal{H}u = \partial_n u(x,t) - g(x,t) = 0 \) 
& Potentially nonlinear \\

Coupled or implicit constraints 
& Nonlinear spatial/temporal relationships 
& Nonlinear \\
\bottomrule
\end{tabular}
}
\end{table}

\begin{algorithm}
\caption{PCFM: Physics-Constrained Flow Matching}
\label{alg:cgfm}
\begin{algorithmic}[1]
\Require Flow model \(v_\theta(u,\tau)\), constraint residual \(h(u)\), initial state \(u_0\), steps \(N\), penalty \(\lambda\)
\Ensure Final state \(u_1\) such that \(h(u_1) = 0\)

\State \(\Delta \tau \gets 1/N\), \quad \(u \gets u_0\)
\For{\(k = 0, \dots, N-1\)}
  \State \(\tau \gets k \cdot \Delta \tau\), \quad \(\tau' \gets \tau + \Delta \tau\)

  \State \(u_1 \gets \mathrm{ODESolve}(u,\,v_\theta,\,\tau,\,1,\,\theta)\)

  \State \(J \gets \nabla h(u_1)^\top\)
  \State \(u_{\mathrm{proj}} \gets u_1 - J^\top (J J^\top)^{-1} h(u_1)\)

  \State \(\hat{u}_{\tau'} \gets \mathrm{ODESolve}(u_{\mathrm{proj}},\; - (u_{\mathrm{proj}} - u_0),\; 1,\; \tau')\)

  \State 
  \(
  u_{\tau'} \gets \arg\min_{u} \left\| u - \hat{u}_{\tau'} \right\|^2 + \lambda \left\| h(u + (1-\tau')v_\theta(u_{\tau'},\tau')) \right\|^2
  \)

  \State
  \(u \gets u_{\tau'}\)
\EndFor
\If{\(\|h(u)\| > \epsilon\)} 
  \State  \(
  u \gets \arg\min_{u} \left\| u - u_1 \right\|^2\quad \text{s.t.}\quad h(u) =0
  \)
\EndIf
\State \Return \(u\)
\end{algorithmic}
\end{algorithm}

\subsection{PCFM: Physics-Constrained Flow Matching}
\vspace{-0.5em}

Given a pretrained generative flow model \( v_\theta(u, \tau) \), the flow dynamics define a pushforward map \( \phi_\tau \) that transports samples \( u_0 \sim \pi_0 \) to solution-like outputs \( u_1 = \phi_1(u_0) \sim \pi_1 \). In our setting, constraints are imposed on the final sample \( u_1 \), via a nonlinear function \( \mathcal{H}u:= h(u_1) = 0 \), which we wish to enforce \emph{exactly} during inference \footnote{While the usage of the term \emph{hard constraint} typically implies that the constraint residual must be exactly zero, we use it synonymously with \emph{approximate hard constraint}, referring to constraint satisfaction up to solver and machine numerical precision, as common in related works~\citep{donti2021dc3, negiarLearning2023, hansenLearning2023}.}.Our goal is to generate samples \( u(\tau) \) along the generative trajectory such that the terminal output \( u_1 \) satisfies \( h(u_1) = 0 \), while remaining aligned with the learned flow \( v_\theta \). To achieve this, we develop a constraint-guided sampling algorithm that interleaves lightweight constraint corrections with marginally consistent flow updates. The core procedure is outlined below.

\paragraph{Forward Shooting and Projection.} At each substep \( \tau \to \tau + \delta \tau \), we first perform a \emph{forward solve} (shooting) to the final flow time \( \tau = 1 \),
\begin{math}
u_1 = \mathrm{ODESolve}\bigl(u(\tau), v_\theta, \tau, 1\bigr),
\label{eq:cgfm-forward}
\end{math}
typically using an inexpensive integrator (e.g., Euler with large step size) since high precision is not required for intermediate inference. We then apply a single \emph{Gauss–Newton projection} to softly align \( u_1 \) to the constraint manifold,
\begin{equation}
r = h(u_1), \quad J = \nabla h(u_1)^\top, \quad
u_{\mathrm{proj}} = u_1 - J^\top (J J^\top)^{-1} r,
\label{eq:cgfm-projection}
\end{equation}
which shifts \( u_1 \) minimally to a linearized feasible point. 
As formalized in Proposition~\ref{prop:tangent-projection-hilbert}, this projection step corresponds to a projection onto the tangent space of the constraint manifold at $u_1$, and recovers the exact solution when $h$ is affine.

\vspace{-0.5em}
\paragraph{Reverse Update via OT Interpolant.} To propagate this correction back to \( \tau' = \tau + \delta \tau \), we consider a reverse ODE solve \(u_{\tau'} = \mathrm{ODESolve}\bigl(u(\tau), -v_\theta, 1, \tau'\bigr)\). ODEs are inherently reversible in theory, however in practice it may have $\mathcal{O}(1)$ error in the backward integration \citep{wanner1996solving} mainly because of the flip of the signs of the eigenvalues of the Jacobian of $v_\theta$ \citep{kim2021stiff}. \citeauthor{gholami2019anode} showcases this can be unstable for neural ODEs with a large absolute value of the Lipschitz constant of $v_\theta$ \citep{ kim2021stiff, gholami2019anode}. However, robust options, such as implicit or adjoint methods \citep{rackauckas2020universal, kidger2022neural}, are computationally prohibitive during inference. To alleviate these issues during inference, instead, we approximate the reverse flow using the Optimal Transport (OT) ~\cite{lipman2022flow, kerrigan2023functional} displacement interpolant used during flow matching:
\begin{equation}
\hat{u}_{\tau'} = \mathrm{ODESolve}\big(u_{\mathrm{proj}}, -(u_{\mathrm{proj}} - u_0), 1, \tau'\big).
\label{eq:cgfm-backward}
\end{equation}
This avoids instability and is justified by Proposition~\ref{prop:ot-reversibility}, which shows that the straight-line displacement approximates the true marginal flow in the small-step limit. This perspective is further supported by recent findings in Rectified Flows \citep{liu2022flow}, where generative paths become increasingly linear as flow resolution improves.

\begin{proposition}[Reversibility under OT Displacement Interpolant]
\label{prop:ot-reversibility}
Let \( v_\theta(u, \tau) \) be a pre-trained marginal velocity field learned via deterministic flow matching, with pushforward map \( \phi_\tau \) such that \( u_1 = \phi_1(u_0) \), for samples \( u_0 \sim \pi_0 \) and \( u_1 \sim \pi_1 \). Suppose \( v_\theta(u, \tau) \) is Lipschitz continuous in both \( u \) and \( \tau \). Then under numerical integration with step size \( \delta \tau \), the flow satisfies
\[
v_\theta(u(\tau), \tau) = u_1 - u_0 + \mathcal{O}(\delta \tau^p),
\]
where \( p \) is the order of the integrator. Moreover, defining the OT displacement interpolant \( \bar{v}(u) := u_1 - u_0 \), the reverse update
\(
\hat{u}_{\tau'} = \mathrm{ODESolve}\bigl(u_{\mathrm{proj}}, -\bar{v}(u), 1, \tau'\bigr)
\)
produces a numerically stable and solver-invariant approximation that is unconditionally reversible as \( \delta \tau \to 0 \).
\end{proposition}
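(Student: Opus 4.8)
The plan is to establish the two assertions in turn: first the displacement estimate \( v_\theta(u(\tau),\tau) = u_1 - u_0 + \mathcal{O}(\delta\tau^p) \), and then the numerical stability, solver-invariance, and reversibility of the reverse update generated by the constant field \( \bar v(u) := u_1 - u_0 \). Throughout, all norms and error estimates are taken in the Hilbert space \( \mathcal{U} \), and the only structural input is the flow-matching construction recalled above together with the assumed Lipschitz continuity of \( v_\theta \) in \( u \) and \( \tau \).

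For the displacement estimate, I would begin from the fact that \( v_\theta \) is trained to match the velocity of the OT (straight-line) interpolant \( u_s = (1-s)u_0 + s\,u_1 \) between coupled endpoints \( (u_0, u_1) \), whose velocity \( \dot u_s = u_1 - u_0 \) is constant along the path. In the near-rectified regime emphasized in the Reverse-Update discussion — and supported by \citet{liu2022flow}, where generative paths straighten as resolution improves — the exact pushforward trajectory \( u(s) = \phi_s(u_0) \) coincides with this segment, so that \( v_\theta(u(\tau),\tau) = u_1 - u_0 \) holds \emph{exactly} along the continuous flow. It then remains to transfer this identity to the numerically integrated trajectory. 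Since \( v_\theta \) is Lipschitz in both arguments, the classical convergence theory of one-step methods applies: the local truncation error is \( \mathcal{O}(\delta\tau^{p+1}) \), which a Grönwall estimate accumulates to a global error \( \mathcal{O}(\delta\tau^p) \) on \( [0,1] \), so the numerical iterates and endpoints remain within \( \mathcal{O}(\delta\tau^p) \) of their exact counterparts. Substituting these into \( v_\theta(u(\tau),\tau) = u_1 - u_0 \) and applying the Lipschitz bound on \( v_\theta \) once more yields \( v_\theta(u(\tau),\tau) = u_1 - u_0 + \mathcal{O}(\delta\tau^p) \).

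For the reverse update, the key observation is that \( \bar v(u) = u_1 - u_0 \) does not depend on \( u \), so the reverse ODE \( \dot u = -\bar v \) is linear with identically zero Jacobian. Its flow is therefore the affine map carrying \( u_{\mathrm{proj}} \) at flow time \( 1 \) to \( (1-\tau')\,u_0 + \tau'\,u_{\mathrm{proj}} \) at flow time \( \tau' \) — the straight-line interpolant between \( u_0 \) and \( u_{\mathrm{proj}} \), up to the solver's time-orientation convention — which is available in closed form and hence reproduced \emph{exactly} by any consistent integrator; this gives the solver-invariance claim. Because the Jacobian vanishes, none of the eigenvalue sign-flip pathologies that destabilize backward integration of \( v_\theta \) (as discussed in the Reverse-Update paragraph) can occur, so the update is unconditionally stable and exactly reversible. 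Finally, replacing \( v_\theta \) by \( \bar v \) in the reverse solve introduces exactly the error bounded in the first part, \( \mathcal{O}(\delta\tau^p) \), which vanishes as \( \delta\tau \to 0 \); hence \( \hat u_{\tau'} \) converges to the true reverse-flow iterate in that limit.

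The step I expect to be the main obstacle is making the rate \( \mathcal{O}(\delta\tau^p) \), rather than merely \( \mathcal{O}(\delta\tau) \), fully rigorous for a generic pretrained \( v_\theta \): unless the pushforward paths are exactly straight, \( v_\theta(u(\tau),\tau) \) differs from \( u_1 - u_0 \) by a term governed by the path curvature \( \sup_s \| \ddot u(s) \| \), which the stated hypotheses do not control on their own. I would therefore carry the argument through under one of two explicit additional assumptions — (a) exact rectification, so that trajectories are straight and only the integrator error survives, or (b) a quantitative curvature bound linking \( \sup_s \| \ddot u(s) \| \) to the discretization — and note that the rectified-flow and small-step regimes make assumption (a) mild in practice, consistent with the empirical straightening behavior cited in the text.
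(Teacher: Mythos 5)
The paper states Proposition~\ref{prop:ot-reversibility} without supplying any proof (neither in the main text nor in the appendix), so there is no authorial argument to compare yours against; what follows is an assessment of your proposal on its own terms. Your treatment of the second half is correct and complete: since \( \bar v(u) = u_1 - u_0 \) is constant in \( u \), the reverse ODE has zero Jacobian, its flow is the affine map \( \hat u_{\tau'} = u_{\mathrm{proj}} - (1-\tau')(u_1 - u_0) \), any consistent one-step method reproduces this exactly (solver invariance), and none of the eigenvalue sign-flip instabilities of integrating \( -v_\theta \) backward can arise. That part needs no further assumptions.

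For the first half, you have correctly identified the genuine gap, and it is a gap in the proposition as stated rather than in your argument. The hypotheses grant only Lipschitz continuity of \( v_\theta \), which controls the \emph{numerical} error of the integrator relative to the exact flow (\( \mathcal{O}(\delta\tau^p) \) by the standard local-truncation-plus-Gr\"onwall argument you sketch) but says nothing about the \emph{modeling} discrepancy between the marginal velocity \( v_\theta(u(\tau),\tau) \) and the chord \( u_1 - u_0 \). For a marginal field learned by flow matching, \( v_\theta(u,\tau) \) is a conditional expectation of displacements over all couplings passing through \( u \), so the exact trajectory \( \phi_\tau(u_0) \) is generally curved and the deviation from the chord is a step-size-independent quantity governed by path curvature — it cannot be absorbed into \( \mathcal{O}(\delta\tau^p) \). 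Your proposed remedies (assume exact rectification, or assume a quantitative curvature bound tied to the discretization) are exactly what is needed to make the displayed estimate true; without one of them the first display is false for a generic Lipschitz \( v_\theta \). Since the paper itself only gestures at rectified flows \citep{liu2022flow} as informal support, your explicit flagging of this missing hypothesis is the right call, and I would recommend stating assumption (a) or (b) as part of the proposition rather than leaving it implicit.
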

\vspace{-0.5em}

\paragraph{Relaxed Constraint Correction.} Due to discretization and the nonlinearity of \( h \), the point \( \hat{u}_{\tau'} \) may still incur residual error. We thus perform a penalized correction:
\begin{equation}
u_{\tau'} = \arg\min_u \|u - \hat{u}_{\tau'}\|^2 + \lambda \big\| h(u + \gamma\,v_\theta(u, \tau')) \big\|^2, \quad \gamma = 1 - \tau',
\label{eq:cgfm-penalty}
\end{equation}
which encourages constraint satisfaction at the extrapolated point while preserving local flow alignment. As \( \delta \tau \to 0 \), this step aligns with the OT-based marginal objective, and offers robustness in settings with coarse discretization, for e.g., necessitated by the need of lower number of function evaluations. We provide an ablation in \Cref{subapp:penalty-ablation} highlighting its effect.

\vspace{-0.5em}

\paragraph{Final Projection and Runtime.} We set \( u_{\tau + \delta \tau} := u_{\tau'} \), and iterate. If constraint residuals remain high at \( \tau = 1 \), we apply a final full projection:
\begin{equation}  
u_1 = \arg\min_{u'} \|u' - u_1\|^2 \quad \text{subject to} \quad h(u') = 0.
\label{eq:cgfm-final-correction}
\end{equation}

To this end, we have developed a custom, batched differentiable solver that performs these updates entirely in parallel: at each iteration it assembles and inverts only an \(m\times m\) Schur complement system (with \(m=\dim h\ll n\)), then applies a Newton‐style correction to the full state (see \Cref{appendix:projection-solver} for the specific algorithm details).  Despite the extra iterations, the overall cost remains \(\mathcal O(n)\) per sample, since (i) each Schur solve is \(\mathcal O(m^3)\) with \(m\ll n\), (ii) the backward flow solve in Eq.~\eqref{eq:cgfm-backward} is a single Euler step, and (iii) the relaxed‐penalty correction in Eq.~\eqref{eq:cgfm-penalty} typically converges in 3–5 gradient steps. We also provide a detailed runtime analysis in \Cref{subsec:runtime}, showing that the final projection step contributes only 1–3\% of total time; most cost is due to sampling itself.

We note that ECI \citep{cheng2024} emerges as a special case of this framework when constraints are linear and non-overlapping, and the penalty is omitted (\( \lambda = 0 \)) (see \Cref{appendix:eci-special-case}). Our method generalizes this by enabling strict enforcement of nonlinear and overlapping constraints while preserving generative consistency. A separate ablation study is also provided highlighting the necessity and role of constraint-guided sampling with consistent flow updates in \Cref{subsec:interproject}.

\vspace{-2mm}
\subsection{Reversibility and Interpolant Compatibility} 
\vspace{-0.5em}

Our method naturally extends to other interpolants, including variance-preserving (VP) Score-Based Diffusion Models (SBDMs) \citep{song2020score,lipman2022flow}. While OT interpolants yield straight-line paths, VP flows benefit from bounded Lipschitz constants due to Gaussian smoothing, making them more stable in reverse \citep{benton2023error}. The key requirement is access to a consistent displacement direction (e.g., \( u_1 - u_0 \)), which enables constraint enforcement via reversible approximations regardless of the underlying interpolant.

\subsubsection{Tradeoff Between Flow Steps and Relaxed Constraint Correction}
\vspace{-0.5em}
In PCFM, the number of flow steps and the strength of the relaxed constraint correction jointly determine the quality of the final solution. Using fewer steps speeds up inference but increases numerical error, potentially violating constraints at intermediate states. In such regimes, the relaxed correction term in \Cref{eq:cgfm-penalty} provides a mechanism to compensate for deviation from the constraint manifold. Conversely, with sufficiently many flow steps, the penalty term becomes less critical, and the dynamics naturally stay closer to the OT path \citep{liu2022flow}. We present detailed ablation results in \Cref{subapp:penalty-ablation}, showing how PCFM adapts across this tradeoff.

\section{Experiments and Results} \label{sxn:results}
\vspace{-0.5em}

\begin{table*}
\centering
\caption{
Generative performance for zero-shot methods on constrained PDEs with linear and nonlinear constraints. Heat and Navier-Stokes enforce global conservation laws (CL) as linear constraints, along with initial condition (IC) constraints. In contrast, Burgers and Reaction-Diffusion apply CL as nonlinear constraints along with IC or BC constraints. Lower values indicate better performance, and \textbf{best results are highlighted in bold.}
}

\label{tab:gen_pde_metrics}

\resizebox{1\linewidth}{!}{
\scriptsize
\begin{tabular}{@{}llcccccc@{}}
\toprule
Dataset & Metric
& { PCFM}
& { ECI} 
& { DiffusionPDE} 
& { D-Flow}
& { PDM}
& { FFM} \\
\midrule
\multirow{5}{*}{Heat Equation} 
& MMSE   / $10^{-2}$      & \textbf{0.241}  & 0.697  & 4.49  & 1.97   & 0.45 & 4.56 \\
& SMSE   / $10^{-2}$    & 0.937  & 0.973  & 3.93  & 1.14   & \textbf{0.02} & 3.51 \\
& CE (IC) / $10^{-2}$   & \textbf{0} & \textbf{0} & 599   & 102    & \textbf{0} & 579 \\
& CE (CL) / $10^{-2}$ & \textbf{0} & \textbf{0} & 2.06  & 64.8   & \textbf{0} & 2.11 \\
& FPD      & 1.22 & 1.34    & 1.70   & 2.70  & \textbf{0.17} & 1.77 \\
\midrule
\multirow{5}{*}{Navier-Stokes} 
& MMSE  / $10^{-2}$   & \textbf{4.59} & 5.23 & 17.4 & -- & 12.21 & 16.5 \\
& SMSE   / $10^{-2}$  & \textbf{4.17} & 7.28 & 9.48 & -- & 6.61 & 7.90 \\
& CE (IC) / $10^{-2}$ & \textbf{0} & \textbf{0} & 288 & -- & \textbf{0} & 328 \\
& CE (CL) / $10^{-2}$  & \textbf{0} & \textbf{0} & 21.4 & -- & \textbf{0} & 18.6 \\
& FPD  & \textbf{1.00}    & 1.04     & 3.70   & --    & 2.01 & 2.81 \\
\midrule
\multirow{5}{*}{Reaction-Diffusion IC} 
& MMSE  / $10^{-2}$   & \textbf{0.026} & 0.324 & 3.16 & 0.318 & 1.74 & 2.92 \\
& SMSE  / $10^{-2}$   & 0.583 & \textbf{0.060} & 2.54 & 6.86 & 0.43 & 2.54 \\
& CE (IC) / $10^{-2}$ & \textbf{0} & \textbf{0} & 451 & 215 & \textbf{0} & 445 \\
& CE (CL) / $10^{-2}$ & \textbf{0} & 6.00 & 3.82 & 29.7 & \textbf{0} & 3.87 \\
& FPD$^\dagger$ & \textbf{15.7}   &  136   & 44.1    & 28.3   &  109  & 24.9 \\
\midrule
\multirow{5}{*}{Burgers BC} 
& MMSE  / $10^{-2}$   & 0.335 & 0.359 & 5.42 & \textbf{0.224} & 11.8 & 4.86 \\
& SMSE  / $10^{-2}$   & 0.123 & \textbf{0.089}& 1.30 & 0.948 & 2.67 & 1.38 \\
& CE (BC) / $10^{-2}$ & \textbf{0} & 20.3 & 426 & 95.7 & \textbf{0} & 409 \\
& CE (CL) / $10^{-2}$ & \textbf{0} & 15.7 & 6.20 & 15.0 & \textbf{0} & 6.91 \\
& FPD  & \textbf{0.292} & 0.307  & 25.9  & 1.44  & 6.41 & 24.7 \\
\midrule
\multirow{5}{*}{Burgers IC} 
& MMSE  / $10^{-2}$   & \textbf{0.052} & 10.0 & 14.3 & 9.97 & 153 & 13.7 \\
& SMSE  / $10^{-2}$   & \textbf{0.272} & 6.65 & 8.06  & 7.91 & 2.62 & 7.90 \\
& CE (IC) / $10^{-2}$ & \textbf{0} & \textbf{0} & 471 & 397 & \textbf{0} & 462 \\
& CE (CL) / $10^{-2}$ & \textbf{0} & 205 & 6.22 & 8.66 & \textbf{0} & 6.91 \\
& FPD      & \textbf{0.101}  & 1.31  & 35.8  & 22.1  & 99.7 & 33.5 \\
\bottomrule
\end{tabular}}
\vspace{0.5em}
\parbox{\linewidth}{
  \footnotesize\raggedright
  \textsuperscript{$\dagger$} \textit{Pretrained Poseidon requires spatially square inputs; we bilinearly interpolated solution grids to \textit{128$\times$128}, which may introduce artifacts in FPD evaluation.}
  \textit{D-Flow and PDM results are omitted due to numerical instabilities.}
}

\parbox{\linewidth}{
  \footnotesize\raggedright
  \textsuperscript{$\dagger$} \textit{Pretrained Poseidon requires spatially square inputs; we bilinearly interpolated solution grids to \textit{128$\times$128}, which may introduce artifacts in FPD evaluation.}
  \textit{D-Flow results are omitted due to numerical instabilities.}
}

\vspace{-5mm}
\end{table*}

In this section, we highlight the key results and compare our approach with representative baselines. For every problem, we construct a PDE numerical solution dataset with two degrees of freedom by varying initial (IC) and boundary conditions (BC) (see Appendix ~\ref{A:PDEdata}), and pre-train an unconditional FFM model \citep{kerrigan2023functional} on this dataset (see Appendix ~\ref{A:modeltraining}). To evaluate generative performance and constraint satisfaction, we focus on generating solution subsets constrained on a selected held-out IC or BC, aiming to guide the pretrained model toward the corresponding solution subset. Furthermore, we incorporate physical constraints, specifically global mass conservation, via PCFM and adapt other baseline methods, where possible, through their sampling frameworks to evaluate their performance on constraint satisfaction tasks. We open-source our code: a Python implementation at \texttt{https://github.com/cpfpengfei/PCFM} and an experimental Julia implementation at \texttt{https://github.com/utkarsh530/PCFM.jl}.

For a variety of PDEs with different constraint requirements, ranging from easier linear constraints to harder nonlinear constraints, we evaluate different sampling methods using the same pretrained FFM model. The specific linear and nonlinear constraints used in our PCFM approach are outlined in Appendix \ref{A:PCFM_constraints}, while the parameters and set-up for other methods are outlined in Appendix \ref{A:sampling}. To evaluate the similarity between the ground truth and generated solution distributions, we follow \citet{kerrigan2023functional} and \citet{cheng2024} to compute the pointwise mean squared errors of the mean (\textbf{MMSE}) and standard deviation (\textbf{SMSE}), along with the Fr\'{e}chet Poseidon Distance (\textbf{FPD}). The FPD quantifies distributional similarity in feature space using a pretrained PDE foundation model (Poseidon)~\cite{poseidon}, analogous to Fr\'{e}chet Inception Distance (FID) used in image generation~\cite{lim2023scorebased}. Importantly, we evaluate the \textbf{constraint errors} by taking the $\ell_2$ norm of the residuals for initial condition (IC), boundary condition (BC), and global mass conservation (CL) over time $t$, averaged across all $N$ generated samples:
\(
\mathrm{CE}(*) = \frac{1}{N} \sum_{n=1}^{N} \left\| \mathcal{R}_{*}\left( \hat{u}^{(n)} \right) \right\|_2, 
\quad \text{where } * \in \{\mathrm{IC}, \mathrm{BC}, \mathrm{CL}\}
\)

\vspace{-4mm}
\subsection{Linear Hard Constraints}
\vspace{-0.5em}
We first focus on linear hard constraints by considering the 2-D Heat equation ($u(x,t)$) and the 3-D Navier-Stokes ($u(x,y,t)$) equation with periodic boundary conditions. The global conservation laws simplify to a linear integral over the solution field, where the conserved mass quantity reduces to a sum or mean of the solution across the spatial domain  (see \Cref{A:PCFM_constraints}). We aim to constrain the generation to satisfy a selected IC and global mass conservation. In Table \ref{tab:gen_pde_metrics}, PCFM outperforms all the compared methods in the MMSE, SMSE, and FPD, achieving machine-level precision for constraints on both IC and global mass conservation. Due to the simplicity of the linear constraints, ECI is also capable of achieving constraint satisfaction in generated samples. 

PCFM further enables us to improve the performance by applying relaxed constraint corrections (Equation \ref{eq:cgfm-penalty}), achieving higher solution fidelity with fewer flow matching steps. For high-dimensional settings such as Navier–Stokes, we adopt a stochastic interpolant perspective by randomizing \( u_0 \) across batches \citep{albergo2023stochastic}. This enhances alignment with the OT displacement path while preserving approximate hard constraint satisfaction at generation time.

\vspace{-0.5em}
\subsection{Nonlinear Hard Constraints}
\vspace{-0.5em}
\subsubsection{Nonlinear Global and Boundary Constraints}
\paragraph{Reaction-Diffusion (nonlinear mass \& Neumann flux).} We first consider a reaction-diffusion PDE with nonlinear mass conservation and Neumann boundary fluxes. When constrained on the fixed IC and nonlinear mass, PCFM achieves the lowest constraint errors and best generation fidelity (Table \ref{tab:gen_pde_metrics}, Figure  \ref{fig:rd-results}). ECI meets the IC constraint by exact value enforcement in their correction step but its framework cannot enforce nonlinear mass conservation. DiffusionPDE and D-Flow, despite using a combined loss function on IC and PINN-loss, likewise fail to enforce both hard constraints simultaneously. In contrast, PCFM satisfies IC and mass conservation to machine precision for all $t$ (Figure \ref{fig:rd-results}), which in turn enforces Neumann boundary fluxes and improves overall solution quality.

\begin{figure}[]
    \centering
    \includegraphics[width=0.98\linewidth]{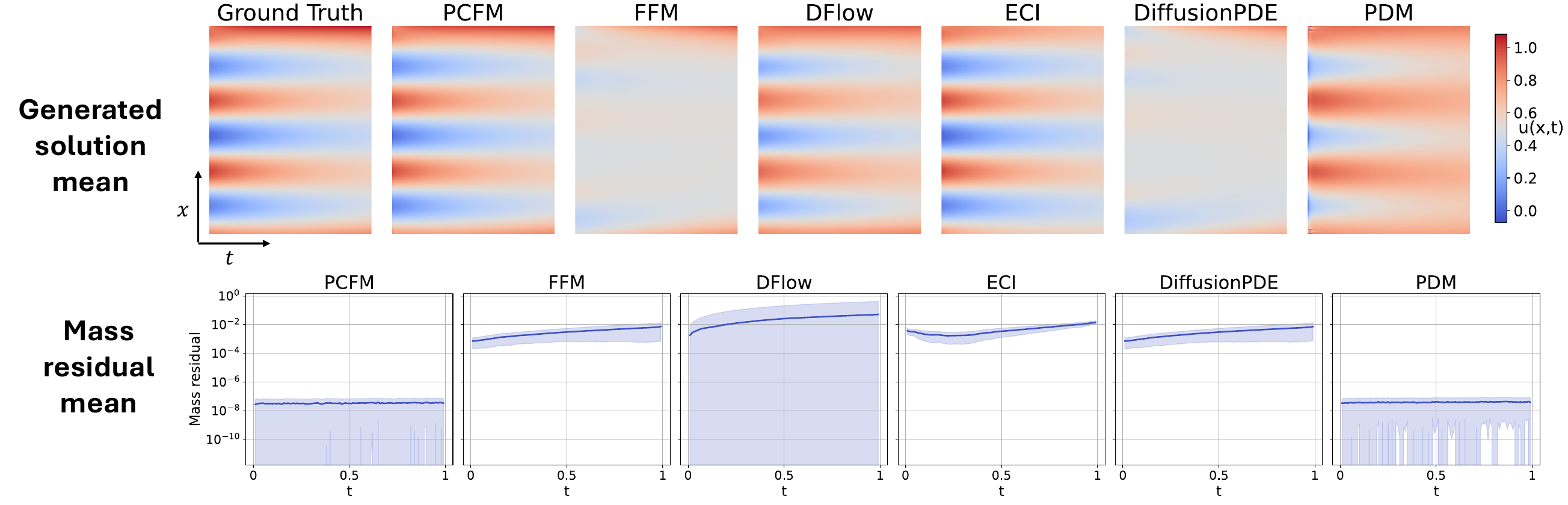}
    \caption{Comparison of mean $\pm$ 1 std. of mass residuals across samples. generated solutions and mass conservation errors for the Reaction-Diffusion problem with IC fixed. By enforcing both IC and nonlinear mass conservation constraints, PCFM improves quality of generated solutions while satisfying both constraints exactly.}
    \label{fig:rd-results}
\end{figure}
\vspace{-2mm}
\paragraph{Burger's Equation (nonlinear mass \& Dirichlet BCs).} Constraining Dirichlet BC at $x=0$ and zero-flux Neumann BC at $x=1$, generated samples should satisfy both BC and nonlinear mass conservation constraints for all $t$. PCFM achieves constraint satisfaction while matching similar fidelity of other methods (Figure \ref{fig:burgersBC1}).

\subsubsection{Nonlinear Local Dynamics and Shock Constraints}
We demonstrate PCFM's ability to tackle a more challenging task of enforcing global nonlinear and local dynamical constraints, achieving improved generation quality while enforcing physical consistencies. Specifically, for Burgers' equation constrained on ICs, PCFM outperforms all baselines across metrics while satisfying both IC and mass conservation. In addition to global constraints, we incorporate $5$ unrolled local flux collocation points in the residual (see Appendix ~\ref{A:PCFM_constraints} for further details). Projecting intermediate $u_1$ to the constraint manifold helps capture the localized shock structure (see top left corner of Figure \ref{fig:burgers-results}), improving solution fidelity to match the shock dynamics in Burgers. In contrast, other methods cannot satisfy hard constraints nor capture the shock dynamics. 

\begin{figure}[h]
    \centering
    \includegraphics[width=1.0\linewidth]{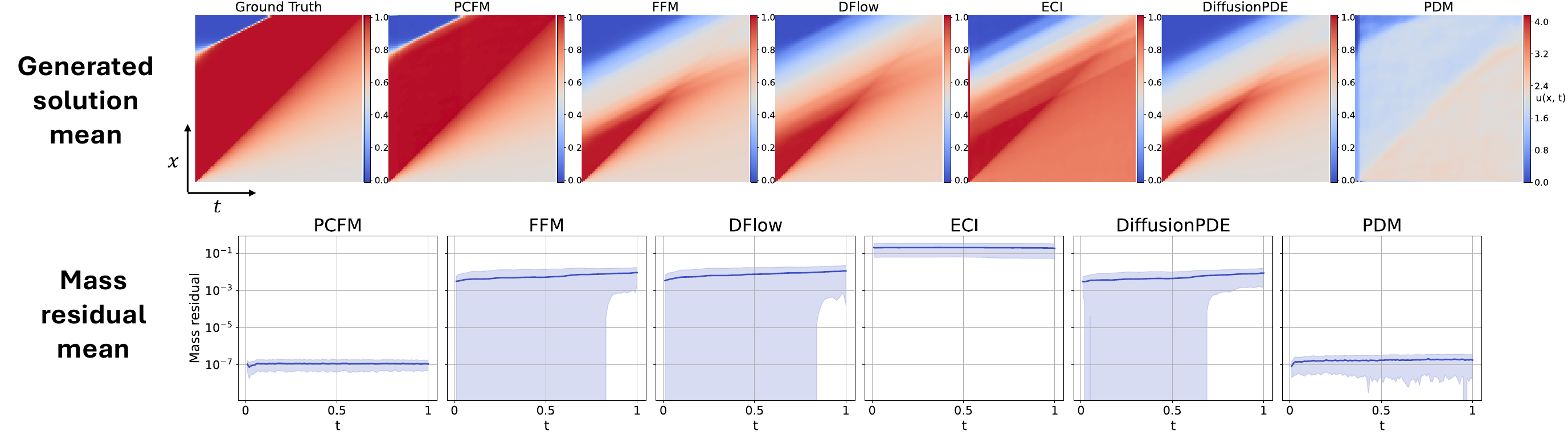}
    \caption{Comparison of mean generated solutions and mass conservation errors for the Burger's problem with IC fixed. By enforcing nonlinear conservation constraints via PCFM, our method captures the Burgers' shock phenomenon, ensures global mass conservation in the generated solution, while improving solution quality. Shaded bands show $\pm$ 1 std. of mass residuals across samples.}
    \label{fig:burgers-results}
\end{figure}

\subsection{Enhancing Fidelity through Additional Physical Constraints}
\vspace{-0.5em}
In our ablation study
(\Cref{fig:burgers-ablation}), we investigate the effect of imposing more constraints, specifically the effect of constrained collocation points, in addition to the IC and mass conservation constraints in the Burgers' problem. We show that adding more constrained collocation points improves generation quality without worsening the satisfaction of IC and mass constraints (see Appendix \ref{A:PCFM_constraints} for constraint set details). Interestingly, while stacking multiple constraints increases computational cost in each Gauss-Newton projection, we find that complementary constraints (local flux, IC, and global mass) can improve performance, unlike soft-constraint approaches such as PINNs, where adding more constraints can degrade performance due to competing objectives \citep{krishnapriyan2021characterizing, wang2021understanding}. However, in cases of conflicting constraints, tradeoffs will arise that require careful balance of their effects on generation fidelity and physical consistency. Indeed, the ability to chain different hard constraints on a pretrained generative model makes PCFM flexible and practical for diverse applications. 

\begin{figure}[h]
    \centering
    \includegraphics[width=0.92\linewidth]{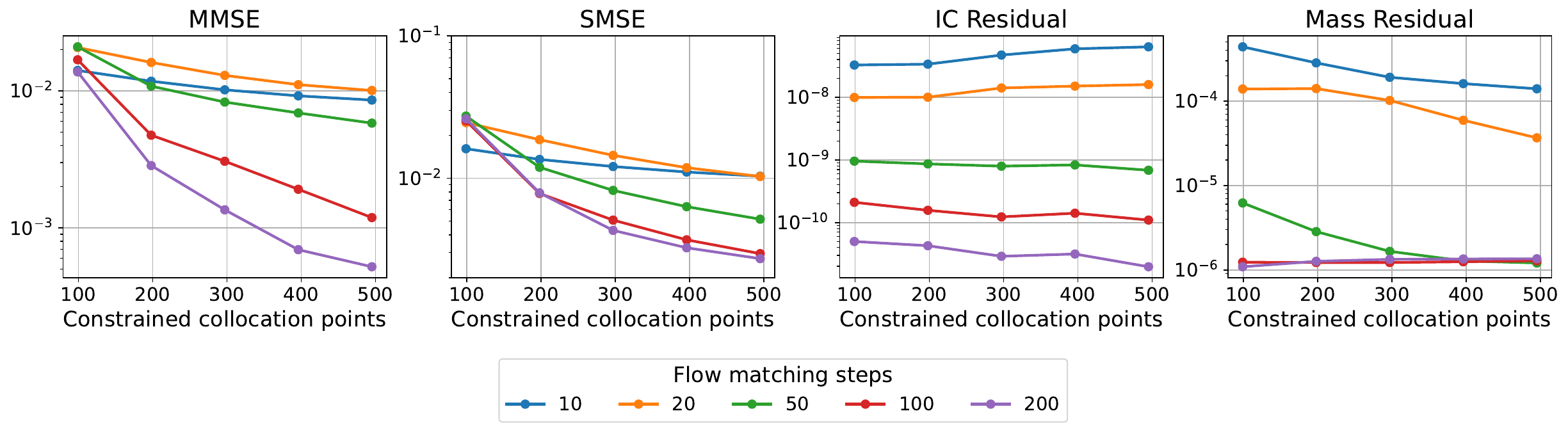}
    \caption{Increasing the number of constraints (constraint collocation points) can improve solution fidelity while maintaining strong satisfaction of other constraints (IC and global mass conservation), demonstrating the ability of PCFM to handle chaining of multiple constraints.}
    \label{fig:burgers-ablation}
\end{figure}

We also explored total variation diminishing (TVD) constraints to promote smoother solution profiles, demonstrating on the heat equation that TVD improves both smoothness and fidelity while preserving IC and global mass constraints (see Appendix~\ref{A:more_ablation}). This highlights PCFM's flexibility to incorporate multiple variety of constraints to enhance generative quality.
\vspace{-4mm}

\section{Conclusion}
\vspace{-2mm}
We presented \ourmethod{}, a zero-shot inference framework for enforcing arbitrary nonlinear equality constraints in pretrained flow-based generative models. \ourmethod{} combines flow-based integration, tangent-space projection, and relaxed penalty corrections to strictly enforce constraints while staying consistent with the learned generative trajectory. Our method supports both local and global constraints, including conservation laws and consistency constraints, without requiring retraining or architectural changes. Empirically, \ourmethod{} outperforms state-of-the-art baselines across diverse PDEs, including systems with shocks and nonlinear dynamics, achieving lower error and exact constraint satisfaction. Notably, we find that enforcing additional complementary constraints improves generation quality, contrary to common limitations observed in soft-penalty methods such as PINNs \citep{wang2021understanding, krishnapriyan2021characterizing}. While \ourmethod{} currently focuses on equality constraints, extending it to inequality constraints and leveraging structure in constraint Jacobians are promising directions for improving scalability \citep{boyd2004convex, bertsekas1997nonlinear}. Our work offers a principled approach for strictly enforcing physical feasibility in generative models, with broad impact on scientific simulation and design.

\section{Acknowledgments}
We acknowledge the MIT SuperCloud and Lincoln Laboratory Supercomputing Center for providing HPC resources. We also acknowledge the Delta GPU system at the National Center for Supercomputing Applications (NCSA), supported by the National Science Foundation and the ACCESS program. P.C. is supported by the Regenerative Energy-Efficient Manufacturing of Thermoset Polymeric Materials (REMAT) Energy Frontier Research Center, funded by the U.S. Department of Energy, Office of Science, Basic Energy Sciences under award DE-SC0023457. 

This material is based upon work supported by the U.S. National Science Foundation under award Nos CNS-2346520, PHY-2028125, RISE-2425761, DMS-2325184, OAC-2103804, and OSI-2029670, by the Defense Advanced Research Projects Agency (DARPA) under Agreement No. HR00112490488, by the Department of Energy, National Nuclear Security Administration under Award Number DE-NA0003965 and by the United States Air Force Research Laboratory under Cooperative Agreement Number FA8750-19-2-1000. Neither the United States Government nor any agency thereof, nor any of their employees, makes any warranty, express or implied, or assumes any legal liability or responsibility for the accuracy, completeness, or usefulness of any information, apparatus, product, or process disclosed, or represents that its use would not infringe privately owned rights. Reference herein to any specific commercial product, process, or service by trade name, trademark, manufacturer, or otherwise does not necessarily constitute or imply its endorsement, recommendation, or favoring by the United States Government or any agency thereof. The views and opinions of authors expressed herein do not necessarily state or reflect those of the United States Government or any agency thereof." The views and conclusions contained in this document are those of the authors and should not be interpreted as representing the official policies, either expressed or implied, of the United States Air Force or the U.S. Government.

\bibliography{refs}
\bibliographystyle{unsrtnat}

\newpage
\appendix
\label{appendix}


    

\section{Proof of Constraint Satisfaction}

We now show that, under standard regularity assumptions, \ourmethod{} (see \Cref{alg:cgfm}) returns a final sample \( u_1 \) satisfying the hard constraint \( h(u_1) = 0 \) to numerical precision.

\begin{theorem}[Exact Constraint Enforcement]
Let \( h : \mathbb{R}^n \to \mathbb{R}^m \) be a twice continuously differentiable constraint function, and suppose the Jacobian \( J_h(u) := \nabla h(u) \in \mathbb{R}^{m \times n} \) has full row rank \( m \leq n \) in a neighborhood of the constraint manifold \( \mathcal{M} := \{ u \in \mathbb{R}^n : h(u) = 0 \} \). Then the final sample \( u_1 \) produced by \ourmethod{} satisfies
\[
h(u_1) = 0
\]
to machine precision, provided the final projection step in Algorithm~1 is solved using sufficiently many Newton--Schur iterations.
\end{theorem}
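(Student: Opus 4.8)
The plan is to show that the final projection step in Algorithm~\ref{alg:cgfm} is a well-posed equality-constrained minimization whose solution satisfies $h(u_1)=0$, and that the Newton--Schur iteration used to solve it converges quadratically to that solution under the stated full-rank hypothesis. The key observation is that the bulk of the theorem is \emph{not} about the generative dynamics at all: regardless of how accurate or inaccurate the intermediate states $u_{\tau}$ are, the terminal conditional ($\|h(u)\|>\epsilon$) triggers the constrained projection \eqref{eq:cgfm-final-correction}, so it suffices to analyze that single step in isolation. I would therefore state and use a lemma of the form: \emph{if $h\in C^2$ and $J_h$ has full row rank $m$ on a neighborhood $\mathcal{N}$ of $\mathcal{M}$, and the pre-projection iterate $u_1$ lies in $\mathcal{N}$, then \eqref{eq:cgfm-final-correction} has a unique solution $u^\star$ in $\mathcal{N}$ with $h(u^\star)=0$, characterized by the KKT system $u^\star - u_1 = J_h(u^\star)^\top \mu$, $h(u^\star)=0$ for some multiplier $\mu\in\mathbb{R}^m$.}

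The main steps, in order, are as follows. First, establish existence and local uniqueness of the projection: full row rank of $J_h$ makes $\mathcal{M}$ a smooth $(n-m)$-dimensional embedded submanifold near $u_1$ (constant-rank/submersion theorem), so the nearest-point projection onto $\mathcal{M}$ is well-defined and smooth in a tubular neighborhood, and the LICQ condition guarantees the KKT conditions are necessary and sufficient for the minimizer. Second, identify the solver: the Newton--Schur iteration described in the text (assemble and invert the $m\times m$ Schur complement $J_h J_h^\top$, apply the Newton-style correction $u \leftarrow u - J_h^\top (J_h J_h^\top)^{-1} h(u)$, as in \eqref{eq:cgfm-projection} iterated to convergence) is exactly the Gauss--Newton / normal-flow iteration for the constraint system. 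Third, invoke the standard local convergence theorem for constrained Gauss--Newton (or the Newton iteration on the projected KKT residual): since $J_h J_h^\top$ is invertible on $\mathcal{N}$ by full rank and $h\in C^2$, the iteration is well-defined and converges $q$-quadratically to a point $u^\star$ with $h(u^\star)=0$, provided the starting iterate is within a sufficiently small neighborhood — which is the meaning of the "sufficiently many iterations" caveat in the statement (equivalently, one may first take enough flow steps / relaxed corrections so that $u_1$ enters the basin, then the final projection drives the residual to machine precision). Fourth, conclude that after termination $\|h(u_1)\|$ is at the level of floating-point roundoff, which is the claimed "machine precision" guarantee; Proposition~\ref{prop:tangent-projection-hilbert} supplies the exact-recovery-in-one-step statement for the affine case as a sanity check of the iteration.

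I expect the main obstacle to be making the neighborhood/basin-of-attraction hypothesis precise and honest. The theorem as stated is slightly informal ("provided ... sufficiently many Newton--Schur iterations"): Gauss--Newton on a nonlinear system only converges \emph{locally}, so strictly one needs either (i) an assumption that $u_1$ already lies in the quadratic-convergence basin of $\mathcal{M}$, or (ii) a globalization argument (line search / damping on the Newton--Schur step, or an argument that the preceding relaxed-penalty corrections \eqref{eq:cgfm-penalty} deliver an iterate inside the basin). I would handle this by adding the hypothesis that the pre-projection iterate lies in the tubular neighborhood $\mathcal{N}$ on which $J_h J_h^\top$ is uniformly invertible — this is the natural reading of "in a neighborhood of the constraint manifold" in the statement — and then the convergence is routine. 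A secondary, minor subtlety is the distinction between the minimum-norm correction actually computed (which is the tangential/normal-space Newton step, not a true Euclidean projection onto the nonlinear $\mathcal{M}$ in one shot) and the $\arg\min$ in \eqref{eq:cgfm-final-correction}; these coincide in the limit of the iteration but not at finite iteration count, so the statement should be read as: the \emph{limit} of the Newton--Schur iterates solves \eqref{eq:cgfm-final-correction}, and finitely many iterations approximate it to roundoff. Everything else (the $C^2$ Taylor estimates controlling the quadratic rate, the bound $\|h(u^{(k+1)})\| \le c\|h(u^{(k)})\|^2$) is standard and I would not grind through it.
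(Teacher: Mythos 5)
Your proposal is correct and follows essentially the same route as the paper's proof: both reduce the claim to the final projection step, use the full-row-rank hypothesis to obtain a locally unique root on $\mathcal{M}$ (you via the submersion theorem and KKT/LICQ, the paper via the implicit function theorem), and then invoke standard local quadratic convergence of the Newton--Schur iteration $u^{(k+1)} = u^{(k)} - J_h^\top (J_h J_h^\top)^{-1} h(u^{(k)})$ to drive $\|h\|$ below tolerance. Your explicit flagging of the basin-of-attraction caveat is in fact more careful than the paper, which handles it only by asserting that the preceding Gauss--Newton, OT-reverse, and relaxed-penalty steps deliver an initial residual small enough (empirically $<10^{-3}$) to lie in the quadratic-convergence regime; your observation that the iteration's limit satisfies $h(u)=0$ but is not literally the Euclidean $\arg\min$ is also accurate and immaterial to the stated conclusion.
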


\begin{proof}
We consider the final correction step in Algorithm~1:
\[
u_1 := \arg\min_{u'} \|u' - u_1\|^2 \quad \text{s.t.} \quad h(u') = 0,
\]
which seeks to project the current sample \( u_1 \) (obtained after flow integration and relaxed penalty correction) onto the feasible manifold.

Let \( u^{(0)} := u_1 \) denote the initial guess to the Newton solver. The constrained root-finding problem \( h(u) = 0 \) is nonlinear, but under the full-rank Jacobian assumption, the \emph{implicit function theorem} ensures a locally unique root \( u^* \in \mathcal{M} \) near \( u^{(0)} \). We solve the nonlinear system using the following Newton--Schur iteration:
\begin{equation}
u^{(k+1)} = u_1 - J_h(u^{(k)})^\top \left[ J_h(u^{(k)}) J_h(u^{(k)})^\top \right]^{-1} \left(h(u^{(k)}) + J_h(u^{(k)})(u_1  - u^{(k)})\right). \tag{A.1}
\end{equation}

This corresponds to the batched Schur-complement Newton update implemented in our differentiable solver. Standard convergence theory (e.g., Theorem 10.1 in \cite{kelley1995iterative}) guarantees that, for sufficiently small \( \|h(u^{(0)})\| \), the iterates \( u^{(k)} \) converge \emph{quadratically} to \( u^* \). Hence, for any tolerance \( \varepsilon > 0 \), there exists a finite \( K \) such that
\[
\|h(u^{(K)})\| < \varepsilon.
\]
Since we continue iterations until \( \|h(u^{(K)})\| < \varepsilon_{\text{tol}} \) (with \( \varepsilon_{\text{tol}} \) typically set to machine precision), we conclude
\[
h(u_1) = h(u^{(K)}) \approx 0.
\]

Finally, we verify that the initial guess \( u^{(0)} = u_1 \) is within the \emph{basin of convergence}. Each prior step of PCFM includes: (i) a Gauss--Newton projection, (ii) an OT-based backward solve, and (iii) a relaxed penalty correction. These ensure the incoming residual \( \|h(u_1)\| \) is already small (typically \( < 10^{-3} \)). Combined with the well-conditioned Jacobian \( J_h \), we ensure convergence of the final Newton--Schur projection.

\end{proof}

\section{PDE Dataset Details}
\label{A:PDEdata}

We evaluate our method on four representative PDE systems commonly studied in scientific machine learning: the \textbf{Heat Equation}, \textbf{Navier-Stokes Equation}, \textbf{Reaction-Diffusion Equation}, and \textbf{Burgers' Equation}. These problems cover linear and nonlinear dynamics, have smooth or discontinuous solutions, and include a variety of initial and boundary condition types. 

\subsection{Heat Equation}

We consider the one-dimensional heat equation with periodic boundary conditions, following the setting in \citet{hansenLearning2023}:
\begin{equation}
    u_t = \alpha u_{xx}, \quad x \in [0, 2\pi], \quad t \in [0, 1],
\end{equation}
with the sinusoidal initial condition
\begin{equation}
    u(x, 0) = \sin(x + \varphi),
\end{equation}
and periodic boundary conditions $u(0, t) = u(2\pi, t)$. The analytical solution is given by $u(x, t) = \exp(-\alpha t) \sin(x + \varphi)$. To pretrain our FFM model, we construct the dataset by varying the diffusion coefficient $\alpha \sim \mathcal{U}(1, 5)$ and phase $\varphi \sim \mathcal{U}(0, \pi)$. All solutions are on a $100 \times 100$ spatiotemporal grid. During constrained sampling, we restrict to the same initial condition by fixing $\varphi = \pi/4$. The global mass conservation constraint is enforced via
\begin{equation}
    \int_0^{2\pi} u(x, t) \, dx = 0, \quad \forall t \in [0, 1].
\end{equation}

\subsection{Navier-Stokes Equation}

We consider the 2D incompressible Navier–Stokes (NS) equation in vorticity form with periodic boundary conditions, following \citet{liFourier2021}:
\begin{align}
    \partial_t w(x, t) + u(x, t) \cdot \nabla w(x, t) &= \nu \Delta w(x, t) + f(x), \\
    \nabla \cdot u(x, t) &= 0,
\end{align}
where \( w = \nabla \times u \) denotes the vorticity and \( \nu = 10^{-3} \) is the viscosity. The initial vorticity \( w_0 \) is sampled from a Gaussian random field, and the forcing function is defined as: $f(x) = 0.1\sqrt{2} \sin\big(2\pi(x_1 + x_2) + \phi\big), \quad \phi \sim \mathcal{U}(0, \pi/2)$. We solve the NS system using a Crank–Nicolson spectral solver on a \(64 \times 64\) periodic grid, recording 50 uniformly spaced temporal snapshots over the interval \( T = 49 \). For training, we generate 10000 simulations by sampling 100 random initial vorticities and 100 forcing phases. For test dataset, we sample an additional 10 vorticities and 100 forces, yielding 1000 solutions. We fix to 1 initial vorticity for comparison during sampling.

Here, we prove the global mass conservation for this NS problem set-up. Under periodic boundary conditions, global vorticity is conserved. Specifically, integrating the vorticity equation over the spatial domain \(\Omega\) and applying the divergence theorem yields:
\[
\frac{d}{dt} \int_\Omega w(x,t) \, dx = \int_\Omega \nu \Delta w(x,t) + f(x) - u \cdot \nabla w(x,t) \, dx = 0,
\]
since the divergence and Laplacian terms vanish due to periodicity and the incompressibility condition \( \nabla \cdot u = 0 \). Thus, the total vorticity \( \int_\Omega w(x,t) \, dx \) remains constant for all \( t \).

\subsection{Reaction-Diffusion Equation}
\label{subsec:rd_dataset}

We consider a nonlinear 1D reaction-diffusion equation with Neumann boundary conditions:
\begin{equation}
    u_t = \rho u (1 - u) - \nu \partial_x u,
\end{equation}
on the domain $x \in [0, 1], t \in [0, 1]$, where $(\rho, \nu) = (0.01, 0.005)$. The boundary fluxes at the left and right ends are specified as $g_L$ and $g_R$, respectively. The initial condition $u(x, 0)$ is sampled from a randomized combination of sinusoidal and localized bump functions (see Appendix~\ref{A:PCFM_constraints} for full specification). The training dataset is constructed by pairing 80 initial conditions with 80 boundary conditions, producing $6400$ PDE solutions, all discretized on a $(nx, nt) = (128, 100)$ space-time grid. We use a semi-implicit finite difference solver with CFL-controlled time stepping. During constrained sampling, we fix one initial condition and vary the boundary fluxes to enforce constraint satisfaction. 

As this system involves nonlinear source and flux terms, global conservation is no longer trivial. Integrating both sides of the PDE over the domain yields:
\begin{equation}
\frac{d}{dt} \int_0^1 u(x,t)\,dx = \rho \int_0^1 u(1 - u)\,dx + g_L(t) - g_R(t),
\end{equation}
which depends nonlinearly on the state \( u(x,t) \) and boundary fluxes \( g_L, g_R \). Thus, conservation must be tracked through both the nonlinear reaction term and boundary-driven transport, making hard constraint enforcement via our framework necessary.

\subsection{Inviscid Burgers' Equation}

We consider the 1D inviscid Burgers' equation with mixed Dirichlet and Neumann boundary conditions:
\begin{equation}
    u_t + \frac{1}{2}(u^2)_x = 0, \quad x \in [0, 1], \quad t \in [0, 1],
\end{equation}
with initial condition defined as a smoothed step function centered at a random location $p_{\text{loc}} \sim \mathcal{U}(0.2, 0.8)$:
\begin{equation}
    u(x, 0) = \frac{1}{1 + \exp\left(\frac{x - p_{\text{loc}}}{\epsilon}\right)}, \quad \epsilon = 0.02.
\end{equation}
We apply a Dirichlet condition on the left, \( u(0, t) = u_{\text{bc}} \) with \( u_{\text{bc}} \sim \mathcal{U}(0, 1) \), and a Neumann condition (zero-gradient) on the right. Solutions are computed using a Godunov finite volume method on a $(nx, nt) = (101, 101)$ grid. For pretraining the FFM model, we vary both initial and boundary conditions, generating 80 variants each to create 6400 solutions in the training set. For constrained sampling, we use two configurations: (1) fixing the initial condition and varying the boundary condition, and (2) fixing the boundary condition and varying the initial condition. 

The conservation constraint is nonlinear and particularly sensitive due to shock formation. Integrating the PDE yields
\[
\frac{d}{dt} \int_0^1 u(x,t)\, dx = -\left[\tfrac{1}{2} u(1,t)^2 - \tfrac{1}{2} u(0,t)^2\right],
\]
so global conservation requires $u(1,t)^2 = u(0,t)^2$, making the constraint highly sensitive to boundary-induced discontinuities.

\section{FFM Model Pretraining}
\label{A:modeltraining}

For all PDE benchmarks, we employ the functional flow matching (FFM) \citep{kerrigan2023functional} framework and parameterize the underlying time-dependent vector field with a Fourier neural operator (FNO) backbone \citep{liFourier2021}. The FNO encoder takes the input the concatenation of the current state $u_\tau$, a sinusoidal positional embedding for the spatial grid and Fourier time embedding. For each benchmark, we adopt the following training scheme unless otherwise specified: 
\begin{itemize}
\item Optimizer: Adam optimizer with learning rate $3 \times 10^{-4}$, $\beta_1 = 0.9$, $\beta_2 = 0.999$, and no weight decay.
\item Learning rate scheduler: Reduce-on-plateau scheduler with a factor of $0.5$, patience of $10$ validation steps, and a minimum learning rate of $1 \times 10^{-4}$.
\item Batch size: $256$ for 1D problems (Heat, Reaction-Diffusion, Burgers) and $24$ for the 2D Navier-Stokes problem.
\end{itemize}

For Heat, we follow \citet{cheng2024} to train on $5000$ analytical solutions of the heat equation. For Reaction-Diffusion and Burgers, we train each model on $6400$ numerical PDE solutions. For these 3 problems, the FNO is configured with $4$ Fourier layers with $32$ Fourier modes for both spatial and temporal dimensions, $64$ hidden channels, and $256$ projection channels. For Navier-Stokes equation, we train a 3D FFM on $10,000$ numerical solutions. The FNO has $2$ Fourier layers with $16$ Fourier modes per dimension, $32$ hidden channels, and $256$ projection channels, following \citet{cheng2024}. FFM models for Heat, Reaction-Diffusion, and Burgers are each trained over $20,000$ steps on 1 NVIDIA V100 GPU and the Navier-Stokes model is trained for $500,000$ steps on 4 NVIDIA A100 GPUs.

Following \citet{kerrigan2023functional}, all FFM models use a Gaussian prior noise $\mathcal{P}(u_0)$. For all 1D problems, we adopt a \emph{Matern Gaussian Process} prior with smoothness parameter $\nu = 0.5$, kernel length $0.001$, and variance $1.0$, implemented using \texttt{GPyTorch}. For Navier-Stokes (2D), we adopt standard Gaussian white noise as the prior, $\mathcal{P}(u_0) = \mathcal{N}(0, I)$. Importantly, PCFM is applied post-hoc at inference time, allowing us to reuse the pretrained unconditional FFM models across constraint configurations.

\section{Constraints Enforcement via PCFM}
\label{A:PCFM_constraints}

We focus on physical and geometric constraints of the form \( \mathcal{H}u = 0 \), which must be satisfied as equality conditions on the solution field \( u \). First, we describe common equality constraints encountered in solving PDEs. Next, we describe specific constraint setup for our PDE datasets.

\subsection{Linear Equality Constraints}
\paragraph{Dirichlet Initial and Boundary Conditions}

Constraints such as \( u(x,0) = \alpha_0(x) \) or \( u(x,t) = g(x,t) \) on \( \partial\Omega \times [0,T] \) define fixed-value conditions that are linear in \( u \). These can be encoded in the general form
\begin{equation}
\mathcal{H}u = A u - b = 0,
\label{eqn:linear-constraint}
\end{equation}
where \( A \) is a sampling or interpolation operator applied to \( u \), and \( b \) is the prescribed target value.

\paragraph{Linear Global Conservation Laws (Periodic Systems)}

In systems with periodic boundary conditions, additive invariants such as total mass are often linearly conserved. A representative example is
\begin{equation}
\mathcal{H}u = \int_{\Omega} u(x,t)\, dx - C = 0,
\label{eqn:linear-integral-constraint}
\end{equation}
where \( C \in \mathbb{R} \) is the conserved quantity. Such constraints can be exactly enforced via analytically derived projections \citep{hansenLearning2023}, geometric integrators~\citep{hairer2006geometric}, and numerical optimization~\cite{boyd2004convex,bertsekas1997nonlinear}.

\subsection{Nonlinear Equality Constraints}

Many physical systems exhibit global conservation laws where the conserved quantity depends nonlinearly on \( u \). A representative form is:
\begin{equation}
\mathcal{H}u = \frac{d}{dt} \int_{\Omega} \rho(u(x,t))\, dx = 0,
\label{eqn:nonlinear-global-conservation}
\end{equation}
where \( \rho(u) \) denotes a nonlinear density, such as total energy or entropy. While such constraints are often not conserved exactly by standard numerical integrators, structure-preserving methods, such as symplectic or variational integrators~\citep{hairer2006geometric}, can conserve invariants approximately over long time horizons. In our setting, however, we seek to enforce such nonlinear conservation laws exactly at inference time and moreover only at the final denoised sample. Thus, we rely on applying iterative projection methods such as Newton's method for constrained least squares~\citep{boyd2004convex} to satisfy \Cref{eqn:nonlinear-global-conservation} up to numerical tolerance.

\paragraph{Neumann-Type Boundary Constraints}

Boundary constraints involving fluxes, such as
\begin{equation}
\mathcal{H}u = \partial_n u(x,t) - g(x,t) = 0,
\label{eqn:neumann-bc}
\end{equation}
are typically linear under standard semi-discretizations, as the derivative operator is linear. However, these constraints can become effectively nonlinear in practice when implemented with upwinding or other nonlinear flux-discretization schemes, particularly in hyperbolic PDEs~\citep{leveque2002finite}. Additionally, if the prescribed flux \( g \) depends nonlinearly on \( u \), the constraint becomes explicitly nonlinear. In both cases, we handle them using differentiable correction procedures or projection-based updates within our framework.

In our PCFM framework, the residual operator $\mathcal{H}(u)$ specifies the equality constraints to be enforced on the generated solution $u$. We enforce these constraints on the generated solution state in our PCFM projection operator, ensuring that the final solution strictly satisfies the required physical constraints without retraining the underlying flow model. We summarize the specific constraint formulations used for each PDE task below. 

\subsection{Heat Equation (IC and Mass Conservation)}  
The residual is a concatenation of two linear constraints:
\begin{equation}
\mathcal{H}_{\text{Heat}}(u) = \begin{bmatrix}
u(x, t=0) - u_{\text{IC}}(x) \\
\int u(x, t) \, dx - \int u(x, t=0) \, dx
\end{bmatrix}
\end{equation}
where the first term enforces the initial condition and the second term ensures mass conservation over time.

\subsection{Navier-Stokes Equation (IC and Mass Conservation)}
For the 2D Navier-Stokes problem, we impose the initial vorticity and the global mass conservation constraint over the spatial domain:
\begin{equation}
\mathcal{H}_{\text{NS}}(u) = 
\begin{bmatrix}
u(x, y, t = 0) - u_{\text{IC}}(x, y) \\
\displaystyle \iint u(x, y, t) \, dx \, dy - \iint u(x, y, t = 0) \, dx \, dy
\end{bmatrix}
\end{equation}
where $u(x, y, t)$ represents the vorticity field over spatial coordinates $(x, y)$ and time $t$. The mass conservation term enforces that the total vorticity integrated over the spatial domain remains consistent with the initial condition over all time steps. 

\subsection{Reaction-Diffusion (IC and Nonlinear Mass Conservation)}  
We enforce both the initial condition and nonlinear mass conservation which accounts for reaction and boundary flux terms:
\begin{equation}
\mathcal{H}_{\text{RD}}(u) = \begin{bmatrix}
u(x, t=0) - u_{\text{IC}}(x) \\
m(t) - \left( m(0) + \int_0^t \rho(u) \, dt + \int_0^t (g_L - g_R) \, dt \right)
\end{bmatrix}
\end{equation}
where $m(t) = \int u(x, t) \, dx$ is the total mass, $\rho(u) = \rho u(1 - u)$ is the nonlinear reaction source term, and $g_L$, $g_R$ are the Neumann boundary fluxes.

\subsection{Burgers (BC and Mass Conservation)}  
We enforce both the boundary conditions (Dirichlet and Neumann) and nonlinear mass conservation:
\begin{equation}
\mathcal{H}_{\text{Burgers-BC}}(u) = \begin{bmatrix}
u(x=0, t) - u_L \\
u(x=-1, t) - u(x=-2, t) \\
\int u(x, t) \, dx - \int u(x, t=0) \, dx
\end{bmatrix}
\end{equation}
where $u_L$ is the Dirichlet boundary value at the left boundary, and the Neumann BC enforces a zero-gradient at the right boundary.

\subsection{Burgers (IC, Mass Conservation, and Local Flux)}
We impose three complementary constraints for the Burgers equation: (i) the initial condition, (ii) global nonlinear mass conservation, and (iii) a sequence of local conservation updates based on Godunov's flux method. Specifically, the residual is formulated as:
\begin{equation}
\mathcal{H}_{\text{Burgers-IC}}(u) = 
\begin{bmatrix}
u(x, t = 0) - u_{\text{IC}}(x) \\
\mathcal{R}_{\text{Flux}}^{(k)}(u) \\
\displaystyle \int u(x, t) \, dx - \int u(x, t = 0) \, dx
\end{bmatrix}
\end{equation}
where $\mathcal{R}_{\text{Flux}}^{(k)}(u)$ applies $k$ unrolled discrete updates (collocation points) based on Godunov flux:
\begin{equation}
\mathcal{F}(u_L, u_R) = 
\begin{cases}
\min\left( \frac{1}{2} u_L^2, \frac{1}{2} u_R^2 \right) & \text{if } u_L \leq u_R \\
\frac{1}{2} u_L^2 & \text{if } u_L > u_R \text{ and } \frac{u_L + u_R}{2} > 0 \\
\frac{1}{2} u_R^2 & \text{if } u_L > u_R \text{ and } \frac{u_L + u_R}{2} \leq 0
\end{cases}
\end{equation}

We apply $k \in \{1, \dots, 5\}$ unrolled steps to incrementally enforce local conservation dynamics alongside global mass conservation and initial condition satisfaction.

\section{Details of the PCFM}

\subsection{Runtime Analysis and Overhead} \label{subsec:runtime}




Assessing runtime and memory overhead is essential for evaluating the practical feasibility of \ourmethod{}. We therefore report detailed runtime and memory comparisons in \Cref{tab:runtime} and \Cref{tab:breakdown} across all baselines using consistent batch sizes (8 for Navier--Stokes and 32 for other datasets) and 100 flow-matching steps on a single NVIDIA V100 GPU. To ensure a fair and interpretable comparison, all experiments are conducted using $\lambda = 0$ in \Cref{eq:cgfm-penalty}. The choice of $\lambda$ can significantly affect convergence behavior and computational cost, as larger penalty weights introduce additional correction steps (a separate ablation is available in \Cref{subapp:penalty-ablation}). However, we observe that $\lambda = 0$ is sufficient to achieve state-of-the-art performance in most cases, while also isolating the intrinsic cost of our projection method with greater clarity.

\ourmethod{} introduces a modest overhead relative to unconstrained baselines such as vanilla FFM, owing to the additional projection step for enforcing constraints. Nevertheless, it achieves a $4$--$6\times$ speedup over D-Flow~\citep{ben2024d}, which requires costly backpropagation through the entire ODE unroll. DiffusionPDE~\citep{huang2024diffusionpde} runs faster but fails to enforce nonlinear constraints robustly, particularly for problems such as Burgers and Reaction--Diffusion. ECI exhibits runtime growth with the number of mixing steps but similarly struggles with nonlinear or coupled constraints. For linear systems such as Heat and Navier--Stokes, the projection converges rapidly, resulting in comparable runtimes to PDM~\citep{christopher2024constrained}. Overall, \ourmethod{} incurs a small yet necessary computational overhead to achieve robust constraint enforcement, maintaining competitive runtime efficiency while providing strict constraint satisfaction.

\begin{table}[h!]
\centering
\caption{Runtime and memory comparison across methods. Reported values correspond to per-sample inference on a single NVIDIA V100 32 GB GPU.}
\vspace{0.3em}
\resizebox{\linewidth}{!}{
\begin{tabular}{@{}llcccccc@{}}
\toprule
\textbf{Dataset} & \textbf{Metric} & \textbf{PCFM} & \textbf{ECI} & \textbf{DiffusionPDE} & \textbf{D-Flow} & \textbf{PDM} & \textbf{Vanilla} \\
\midrule
\multirow{2}{*}{Heat} 
& Time (ms/sample) & 291.8 & 65.6 & 128.6 & 2770.5 & 361.3 & 65.2 \\
& Peak Memory (GB) & 2.41 & 1.21 & 2.39 & 3.58 & 1.06 & 1.21 \\
\midrule
\multirow{2}{*}{Burgers} 
& Time (ms/sample) & 1371.0 & 780.6 & 211.2 & 8048.9 & 444.9 & 105.4 \\
& Peak Memory (GB) & 2.57 & 1.26 & 2.43 & 3.65 & 1.08 & 1.22 \\
\midrule
\multirow{2}{*}{Reaction--Diffusion} 
& Time (ms/sample) & 636.6 & 744.4 & 159.9 & 6578.7 & 474.7 & 80.7 \\
& Peak Memory (GB) & 2.97 & 1.72 & 2.93 & 4.33 & 1.43 & 1.43 \\
\midrule
\multirow{2}{*}{Navier--Stokes} 
& Time (ms/sample) & 350.7 & 3355.6 & 678.8 & 16447.2 & 338.7 & 343.8 \\
& Peak Memory (GB) & 7.39 & 2.03 & 3.90 & 5.85 & 1.93 & 2.02 \\
\bottomrule
\end{tabular}}
\label{tab:runtime}
\end{table}

\begin{table}[h!]
\centering
\caption{Runtime breakdown for \ourmethod{} showing the cost of sampling and the final projection step.}
\vspace{0.3em}
\begin{tabular}{@{}lccc@{}}
\toprule
\textbf{Dataset} & \textbf{Component} & \textbf{Time (ms/sample)} & \textbf{Percentage} \\
\midrule
Heat Equation & Sampling & 286.6 & 98.22\% \\
               & Final Projection & 5.19 & 1.78\% \\
\midrule
Burgers Equation & Sampling & 1358.8 & 99.11\% \\
                 & Final Projection & 12.2 & 0.89\% \\
\midrule
Reaction--Diffusion & Sampling & 630.2 & 98.99\% \\
                    & Final Projection & 6.40 & 1.01\% \\
\midrule
Navier--Stokes & Sampling & 350.0 & 99.8\% \\
                & Final Projection & 0.7 & 0.2\% \\
\bottomrule
\end{tabular}
\label{tab:breakdown}
\end{table}

\subsection{Tangent-Space Interpretation of the Projection Step}

We provide additional theoretical context for the \ourmethod{}. In \Cref{prop:tangent-projection-hilbert}, we formalize the projection step as a tangent-space update in Hilbert spaces and show that it corresponds to an orthogonal projection onto the linearized constraint manifold, justifying its use for enforcing both linear and nonlinear constraints.

\begin{proposition}[Tangent-Space Projection in Hilbert Spaces]
\label{prop:tangent-projection-hilbert}
Let \( \mathcal{H} \) be a real Hilbert space and let \( h : \mathcal{H} \to \mathbb{R}^m \) be a Fréchet-differentiable constraint operator. Consider a point \( u_1 \in \mathcal{H} \), and define the Jacobian \( J := Dh(u_1) \in \mathcal{L}(\mathcal{H}, \mathbb{R}^m) \), the bounded linear operator representing the Fréchet derivative.

Then, the update
\[
u_{\mathrm{proj}} = u_1 - J^* (J J^*)^{-1} h(u_1)
\]
is the unique solution to the constrained minimization problem
\[
\min_{u \in \mathcal{H}} \|u - u_1\|_{\mathcal{H}}^2 \quad \text{subject to} \quad h(u_1) + J(u - u_1) = 0,
\]
and corresponds to the orthogonal projection of \( u_1 \) onto the affine subspace defined by the linearization of \( h \) at \( u_1 \), i.e., the tangent space to the constraint manifold at \( u_1 \).

In the special case where \( h(u) = A u - b \) is affine, with \( A \in \mathcal{L}(\mathcal{H}, \mathbb{R}^m) \), the update becomes the exact projection onto the feasible set \( \{ u \in \mathcal{H} : A u = b \} \).
\end{proposition}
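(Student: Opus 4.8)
The plan is to recognize this as the classical minimum-norm solution of an affine-constrained quadratic problem, lifted from \( \mathbb{R}^n \) to a Hilbert space. First I would substitute \( d := u - u_1 \), so the objective becomes \( \|d\|_{\mathcal{H}}^2 \) and the linearized constraint becomes the affine condition \( J d = -h(u_1) \). For the formula \( J^*(JJ^*)^{-1} \) to be meaningful we need \( J \) surjective, i.e. full row rank \( m \) (the standing assumption); then the adjoint \( J^* \in \mathcal{L}(\mathbb{R}^m, \mathcal{H}) \) is injective, and \( JJ^* \in \mathcal{L}(\mathbb{R}^m) \) satisfies \( \langle JJ^* y, y\rangle_{\mathbb{R}^m} = \|J^* y\|_{\mathcal{H}}^2 > 0 \) for \( y \neq 0 \), so \( JJ^* \) is symmetric positive definite and invertible. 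Hence \( d^\star := -J^*(JJ^*)^{-1} h(u_1) \) is well defined, and \( J d^\star = -(JJ^*)(JJ^*)^{-1} h(u_1) = -h(u_1) \), so \( u_{\mathrm{proj}} = u_1 + d^\star \) is feasible.

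For optimality and uniqueness I would invoke the orthogonal decomposition \( \mathcal{H} = \ker J \oplus \operatorname{range} J^* \), valid because \( \operatorname{range} J^* \) is finite-dimensional, hence closed, and \( (\ker J)^\perp = \overline{\operatorname{range} J^*} = \operatorname{range} J^* \). Any feasible \( d \) satisfies \( J(d - d^\star) = 0 \), so \( d = d^\star + w \) with \( w \in \ker J \); since \( d^\star \in \operatorname{range} J^* = (\ker J)^\perp \), the Pythagorean identity gives \( \|d\|_{\mathcal{H}}^2 = \|d^\star\|_{\mathcal{H}}^2 + \|w\|_{\mathcal{H}}^2 \geq \|d^\star\|_{\mathcal{H}}^2 \), with equality iff \( w = 0 \). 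This proves \( d^\star \) is the unique minimizer, and identifies \( u_{\mathrm{proj}} \) as the orthogonal projection of \( u_1 \) onto the affine subspace \( \{ u : h(u_1) + J(u - u_1) = 0 \} \), which is the tangent affine space to the level set \( \{ h = h(u_1) \} \) at \( u_1 \); when \( h(u_1) = 0 \) this is exactly the tangent space to the constraint manifold \( \mathcal{M} \) at \( u_1 \).

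For the affine special case \( h(u) = Au - b \), I would note \( Dh(u_1) = A \) for every \( u_1 \), so the linearized constraint \( h(u_1) + A(u - u_1) = (Au_1 - b) + Au - Au_1 = Au - b = h(u) \) coincides with the original condition \( h(u) = 0 \). Hence the feasible set of the linearized problem equals the true feasible set \( \{ u : Au = b \} \), and the tangent-space projection is the exact metric projection onto it, with \( u_{\mathrm{proj}} = u_1 - A^*(AA^*)^{-1}(Au_1 - b) \).

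The main obstacle is not computational but careful functional-analytic bookkeeping: one must explicitly use finite-dimensionality of the codomain \( \mathbb{R}^m \) to ensure both that \( JJ^* \) is invertible and that \( \operatorname{range} J^* \) is closed, so the \( \mathbb{R}^n \)-style orthogonal-complement argument transfers verbatim; and one should state the geometric conclusion precisely, since the phrase \emph{tangent space to the constraint manifold} is literally accurate only at points already on \( \mathcal{M} \), whereas in general the relevant object is the tangent affine subspace to the level set of \( h \) through \( u_1 \).
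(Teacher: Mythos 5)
Your proof is correct. One point of comparison worth noting: the paper states Proposition~\ref{prop:tangent-projection-hilbert} without an explicit proof; the closest the paper comes is the Lagrangian/KKT derivation in Appendix~\ref{appendix:projection-solver}, where the stationarity conditions \( u - u_1 + J^\top \lambda = 0 \), \( h(u)=0 \) are linearized and the multiplier is eliminated via the Schur complement \( (JJ^\top)\lambda = h(u) \), yielding the same update in finite dimensions. You instead argue directly from the orthogonal decomposition \( \mathcal{H} = \ker J \oplus \operatorname{range} J^* \), verifying feasibility of \( d^\star = -J^*(JJ^*)^{-1}h(u_1) \) and then using the Pythagorean identity to get optimality and uniqueness in one stroke. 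Your route buys two things the multiplier argument does not automatically give: (i) it works verbatim in the infinite-dimensional setting, since you explicitly use finite-dimensionality of the codomain to close \( \operatorname{range} J^* \) and invert \( JJ^* \); and (ii) it delivers uniqueness without appealing to convexity or second-order conditions. The Lagrangian route, in exchange, generalizes more directly to the genuinely nonlinear projection (where the linearization is only the first Newton step). You are also right to flag that surjectivity of \( J \) is needed for \( (JJ^*)^{-1} \) to exist --- the proposition omits this hypothesis, though the paper's Theorem~A.1 assumes full row rank --- and your remark that ``tangent space to the constraint manifold'' is literally correct only when \( h(u_1)=0 \), and is otherwise the tangent affine subspace to the level set of \( h \) through \( u_1 \), is a precision the paper's statement glosses over.
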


\section{Connection to ECI as a Special Case}
\label{appendix:eci-special-case}

The ECI (Extrapolate--Correct--Interpolate) framework \citep{cheng2024} only considers and showcases empirical results on constraints such as fixed initial conditions, Dirichlet boundary values, and global mass integrals. These are all linear and non-overlapping, and are applied at isolated subsets of the solution field. To be used with the ECI framework, such constraints must admit closed-form oblique projections and act independently across disjoint regions. Let us denote any value or regional constraint~\citep{cheng2024} as a linear constraint \( h(u) = A u - b \), for some matrix \( A \in \mathbb{R}^{m \times n} \), vector \( b \in \mathbb{R}^m \), and for simplicity and comparison purposes, the ECI hyperparameters such as re-sampling is set to its default value (0) and mixing to be 1.

In this regime, our Gauss--Newton update during PCFM guidance simplifies to an exact constraint satisfaction step for linear constraints:
\[
u_{\mathrm{proj}} = u_1 - A^\top (A A^\top)^{-1}(A u_1 - b),
\]
which is precisely the "correction" step ECI employs. Moreover, by disabling the relaxed penalty (\( \lambda = 0 \)) and skipping intermediate corrections, PCFM reduces exactly to ECI.

Hence, ECI is a special case of PCFM, tailored to simple linear constraints, while PCFM provides a general and principled framework for enforcing arbitrary nonlinear equality constraints during generative sampling.

\section{Pre-trained Models with Functional Flow Matching}
\label{app:pretrain_ffm}

\subsection{Flow Matching}
\label{subapp:fm}

Flow Matching (FM)~\citep{lipman2022flow, liu2022flow, albergo2022building} formulates generative modeling as learning a time-dependent velocity field \( v_\theta(u, \tau) \) that transports a prior sample \( u_0 \sim \pi_0 \) to a target sample \( u_1 \sim \pi_1 \). The induced flow \( \phi_\tau \) satisfies the ODE
\[
\frac{d}{d\tau} \phi_\tau(u) = v_\theta(\phi_\tau(u), \tau), \quad \phi_0(u_0) = u_0,
\]
and defines a trajectory \( u(\tau) = \phi_\tau(u_0) \) connecting \( \pi_0 \) to \( \pi_1 \). The learning objective minimizes the squared deviation between the model field and a known target field \( \hat{v} \), evaluated along interpolants \( u_\tau = (1 - \tau) u_0 + \tau u_1 \). This yields the standard Flow Matching loss:
\begin{equation}
\mathcal{L}_{\mathrm{FM}}(\theta) = \mathbb{E}_{\tau \sim \mathcal{U}[0,1],\; u_0 \sim \pi_0,\; u_1 \sim \pi_1} \left[ \|v_\theta(u_\tau, \tau) - \hat{v}(u_\tau, \tau)\|^2 \right],
\end{equation}
where \( \hat{v}(u_\tau, \tau) \) is the true vector field, which is typically unknown.

\paragraph{Conditional Flow Matching (CFM).}
When the conditional velocity field \( u_t(u_1) := \partial_\tau \psi_\tau(u_1) \) is known in closed form, as under optimal transport, one can minimize the conditional flow matching loss:
\begin{equation}
\mathcal{L}_{\mathrm{CFM}}(\theta) = \mathbb{E}_{\tau, u_0, u_1} \left[ \|v_\theta(u_\tau, \tau) - (u_1 - u_0)\|^2 \right],
\end{equation}
where \( u_\tau = (1 - \tau) u_0 + \tau u_1 \) is a linear interpolant. This CFM objective forms the basis for most recent flow-based generative models due to its tractability and effectiveness.

\subsection{Functional Flow Matching}
\label{subapp:ffm}

To handle infinite-dimensional generative tasks such as PDE solutions, Functional Flow Matching (FFM)~\citep{kerrigan2023functional} extends CFM to Hilbert spaces \( \mathcal{U} \) of functions \( u: \mathcal{X} \rightarrow \mathbb{R} \). The FFM flow \( \psi_\tau \) satisfies
\[
\frac{d}{d\tau} \psi_\tau(u) = v_\theta(\psi_\tau(u), \tau), \quad u_0 \sim \mu_0,
\]
where \( \mu_0 \) is a base measure over function space. Under regularity assumptions, FFM minimizes an analogous loss over the path of measures:
\begin{equation}
\mathcal{L}_{\mathrm{FFM}}(\theta) = \mathbb{E}_{\tau, u_0, u_1} \left[ \|v_\theta(u_\tau, \tau) - (u_1 - u_0)\|_{L^2(\mathcal{X})}^2 \right],
\end{equation}
where \( u_\tau = (1 - \tau) u_0 + \tau u_1 \) and the \( L^2 \)-norm is used to evaluate function-valued velocities over the domain \( \mathcal{X} \). This loss generalizes CFM to the functional setting, and serves as the pretraining objective for \ourmethod{} in this work.

\section{Sampling Setups for PCFM and Other Baselines}
\label{A:sampling}

For all comparisons, we adopt the explicit Euler integration scheme unless otherwise specified. We use $100$ Euler update (flow matching) steps for heat and Navier-Stokes, and $200$ steps for Reaction-Diffusion and Burgers (IC or BC).

\paragraph{Vanilla FFM.}
We perform unconstrained generation by integrating the learned flow vector field using explicit Euler steps, following the standard flow matching procedure \citep{lipman2022flow, kerrigan2023functional}. At each step, the model applies the forward update $u_{t + \Delta t} = u_t + \Delta t \, v_\theta(t, u_t)$ without any constraint enforcement.

\paragraph{PCFM (Ours).} PCFM performs explicit Euler integration combined with constraint correction at every step. We apply $1$ Newton update to project intermediate states onto the constraint manifold. Additionally, we optionally apply guided interpolation with $\lambda = 1.0$, step size $0.01$, and $20$ refinement steps to further refine interpolated states, although we use $\lambda = 0$ by default (no interpolation guidance) for computational efficiency (see \Cref{subapp:penalty-ablation} for further ablations on $\lambda$). Interpolation guidance is only applied for the heat equation case to obtain results in \Cref{tab:gen_pde_metrics}. For Navier-Stokes, we follow the stochastic interpolant idea and randomize noise over batches~\citep{albergo2023stochastic} (i.e., different noise samples over batches). Finally, after all flow matching steps, we apply a final projection on the solution to enforce the constraints.

\paragraph{ECI.} We follow the ECI sampling procedure introduced by \citet{cheng2024} where it performs iterative extrapolation-correction-interpolation steps as well as several mixing and noise resampling steps throughout the trajectory. For the simpler heat equation case, we did not do mixing and noise resampling. For more challenging PDEs (Navier-Stokes, Reaction-Diffusion, and Burgers), we apply $n_{\text{mix}} = 5$ mixing updates per Euler step, and resample the Gaussian prior every $5$ steps. We adopt their value enforcements in IC and Dirichlet BC cases, and also constant mass integral enforcement in the heat and Navier-Stokes problems, where the periodic boundary conditions lead to trivial linear constraints. 

\paragraph{D-Flow.}
Following \citet{ben2024d}, D-Flow differentiates through an unrolled Euler integration process. Following their paper and the set-up in \citet{cheng2024}, we optimize the initial noise via LBFGS with $20$ iterations and a learning rate of $1.0$. The optimization minimizes the constraint violation at the final state, requiring gradient computation through the full unrolled trajectory. We adopt the adjoint method shipped from \texttt{torchdiffeq}~\citep{chen2018neural} to differentiate through the ODE solver. Depending on the problem, we adopt an IC or BC loss as well as a PINN-loss based on the differential form of the PDE to form the constraint loss function to be optimized. We use $10$ iterations and a learning rate of $10^{-2}$ for the Navier-Stokes dataset to avoid NaNs in the optimization loop.

\paragraph{DiffusionPDE.}
We adopt a gradient-guided sampling method proposed by \citet{huang2024diffusionpde} on the same pretrained FFM model. The generation process is augmented with explicit constraint correction where a composite loss of IC/BC and PINN~\citep{raissi2019physics} (differential form of the PDE), like in the D-Flow set-up, is used to guide the vector field at each flow step. We apply a correction coefficient $\eta = 1.0$ and set the PINN loss weight to $10^{-2}$ for stable generation. We find that a higher PINN weight or higher learning rate lead to unstable generation.

\paragraph{PDM.}

We implement the PDM algorithm proposed by \citet{christopher2024constrained} for constrained generation. Although PDM also employs projection during sampling, its assumptions and modeling framework differ fundamentally from those of PCFM. For a fair comparison, we additionally implemented a PDM-style ablation within our flow models using the proposed projection step. While PDM achieves comparable performance on simpler systems such as the Heat equation, PCFM consistently outperforms it on more complex nonlinear dynamics, suggesting that PDM’s strict per-step projections over-constrain the sampling trajectory and degrade generation quality.

\section{Batched Differentiable Solver for Nonlinear Constraints}
\label{appendix:projection-solver}

We describe our solver for projecting a predicted sample \( u_1 \in \mathbb{R}^n \) onto the nonlinear constraint manifold \( \mathcal{M} := \{ u \in \mathbb{R}^n : h(u) = 0 \} \), where \( h : \mathbb{R}^n \to \mathbb{R}^m \). The projection is formulated as the constrained optimization problem:
\begin{equation}
\min_{u \in \mathbb{R}^n} \; \tfrac{1}{2} \| u - u_1 \|^2 \quad \text{subject to} \quad h(u) = 0.
\end{equation}

The corresponding Lagrangian is:
\[
\mathcal{L}(u, \lambda) = \tfrac{1}{2} \| u - u_1 \|^2 + \lambda^\top h(u),
\]
with first-order optimality conditions:
\[
\nabla_u \mathcal{L}(u, \lambda) = u - u_1 + J(u)^\top \lambda = 0, \qquad h(u) = 0,
\]
where \( J(u) := \nabla h(u) \in \mathbb{R}^{m \times n} \) is the constraint Jacobian.

These yield the full nonlinear KKT system:
\begin{equation}
\label{eq:kkt-nonlinear}
\begin{cases}
u - u_1 + J(u)^\top \lambda = 0, \\
h(u) = 0.
\end{cases}
\end{equation}

\paragraph{Newton-Based Update.}  
At iteration \( k \), we linearize the KKT system around \( u^{(k)}, \lambda^{(k)} \) and solve for updates \( \delta u, \delta \lambda \). The full Newton step solves:
\begin{equation}
\label{eq:kkt-newton}
\begin{bmatrix}
I + \sum_{i=1}^m \lambda_i^{(k)} \nabla^2 h_i(u^{(k)}) & J(u^{(k)})^\top \\
J(u^{(k)}) & 0
\end{bmatrix}
\begin{bmatrix}
\delta u \\
\delta \lambda
\end{bmatrix}
=
-
\begin{bmatrix}
u^{(k)} - u_1 + J(u^{(k)})^\top \lambda^{(k)} \\
h(u^{(k)})
\end{bmatrix}.
\end{equation}

Here, the upper-left block contains the Hessian of the Lagrangian:
\[
\nabla^2_{uu} \mathcal{L}(u^{(k)}, \lambda^{(k)}) = I + \sum_{i=1}^m \lambda_i^{(k)} \nabla^2 h_i(u^{(k)}).
\]

After solving, we update the primal and dual variables:
\[
u^{(k+1)} = u^{(k)} + \delta u, \quad \lambda^{(k+1)} = \lambda^{(k)} + \delta \lambda.
\]

This Newton system converges quadratically under standard regularity assumptions (e.g., full-rank Jacobian and Lipschitz-continuous second derivatives). In practice, we often omit the second-order terms to yield a Gauss--Newton approximation that is more stable and efficient in high-dimensional settings.

\paragraph{Approximate KKT Solve via Schur Complement.}
For inference-time projection, we adopt a simplified and batched update using the Schur complement~\citep{boyd2004convex}. At each iteration, we set \( \lambda = 0 \), and solve only for the primal update. Eliminating \( \delta u \) from \Cref{eq:kkt-newton}, we obtain:
\begin{equation}
\label{eq:schur-update}
\left( J J^\top \right) \lambda = h(u), \qquad \delta u = - J^\top \lambda.
\end{equation}
This gives the Gauss--Newton-style update:
\begin{equation}
u \gets u_1 - J^\top (J J^\top)^{-1} \left(h(u) + J(u_1 - u)\right).
\end{equation}
We iterate this procedure until convergence or until the constraint residual \( \|h(u)\| \) falls below a set tolerance. The matrix \( J J^\top \in \mathbb{R}^{m \times m} \) is small and typically well-conditioned for local or sparse constraints, enabling efficient solves.

Restarting the dual variable with \( \lambda = 0 \) at each iteration avoids stale gradient accumulation and improves numerical stability. This leads to a robust and memory-efficient projection routine that supports batched execution and reverse-mode differentiation.

\paragraph{Batched and Differentiable Implementation.}
We implement the solver in a batched and differentiable fashion to support inference across samples. For a batch of inputs \( \{u_1^i\}_{i=1}^B \), we evaluate Jacobians \( J^i \), residuals \( h(u^i) \), and solve the corresponding Schur systems in parallel using vectorized operations and autodiff-compatible backends (e.g., PyTorch with batched Cholesky or linear solvers)~\citep{paszkePytorch2019}. Potentially, specific GPU kernels can be built for ODE integration to ensure optimal performance \citep{chen2018neural,utkarsh2024automated}.

\paragraph{Computational Complexity.}
The per-sample cost includes:
\begin{itemize}
  \item \( \mathcal{O}(m^2 n) \) for computing \( J \) and \( J^\top \),
  \item \( \mathcal{O}(m^3) \) for solving the Schur complement system,
  \item \( \mathcal{O}(n) \) for applying the update.
\end{itemize}
Since typically \( m \ll n \), the overall cost scales as \( \mathcal{O}(n) \) per sample.

\section{Evaluation Metrics}
\label{app:metrics}

We evaluate each method using $512$ generated samples for all 1D problems and $100$ samples for the 2D Navier-Stokes problem. For each setting, we use an equivalent number of ground truth PDE solutions with a fixed IC or BC configuration used during sampling for direct comparison.

\paragraph{MMSE and SMSE.} Following \citet{kerrigan2023functional, cheng2024}, we evaluate generation fidelity with mean of the MSE and the standard deviation of the MSE as: 

\[
\text{MMSE} = \| \mu_{\text{gen}} - \mu_{\text{gt}} \|_2^2, \quad
\text{SMSE} = \| \sigma_{\text{gen}} - \sigma_{\text{gt}} \|_2^2,
\]

where $\mu_{\text{gen}}$, $\mu_{\text{gt}}$ denote the mean across generated and true PDE solutions, and $\sigma_{\text{gen}}, \sigma_{\text{gt}}$ the standard deviations. 

\paragraph{Constraint Error.} To evaluate physical consistency, we compute the $\ell_2$ norm of residuals from constraint functions $\mathcal{R}_{*}$ applied to each sample $\hat{u}^{(n)}$, then average across all $N$ samples:
\[
\mathrm{CE}(*) = \frac{1}{N} \sum_{n=1}^{N} \left\| \mathcal{R}_{*}\left( \hat{u}^{(n)} \right) \right\|_2, \quad * \in \{\text{IC}, \text{BC}, \text{CL}\}.
\]
where the residuals are computed following Appendix ~\ref{A:PCFM_constraints} to measure the constraint or conservation violation.

\paragraph{Fr\'echet Poseidon Distance (FPD).} To assess distributional similarity beyond mean and variance, we adopt the Fr\'echet Poseidon Distance (FPD) introduced in \citet{cheng2024}, where we measure the Fr\'echet distance between the hidden state distributions extracted from a pretrained foundation model (Poseidon \cite{poseidon}) applied to both generated and true solutions. We pass both generated and true solutions through the Poseidon base model (157M parameters) and extract the last hidden activations of the encoder to eventually obtain a 784-dimension feature vector for FPD calculation:
\begin{equation}
\mathrm{FPD}^2 = \left\| \mu_1 - \mu_2 \right\|^2 + \mathrm{Tr}\left( \Sigma_1 + \Sigma_2 - 2 \left( \Sigma_1 \Sigma_2 \right)^{1/2} \right),
\end{equation}
where \( \mu_1, \Sigma_1 \) and \( \mu_2, \Sigma_2 \) are the empirical mean and covariance of the Poseidon embeddings from the generated and true solutions' distributions, respectively. FPD is computed either per frame $u(x,y)$ for 2D PDEs and averaged over all frames (across $t$) or over the full spatiotemporal solution $u(x,t)$ for 1D PDEs.

\section{Further Results}

\begin{figure}[H]
    \centering
    \includegraphics[width=0.95\linewidth]{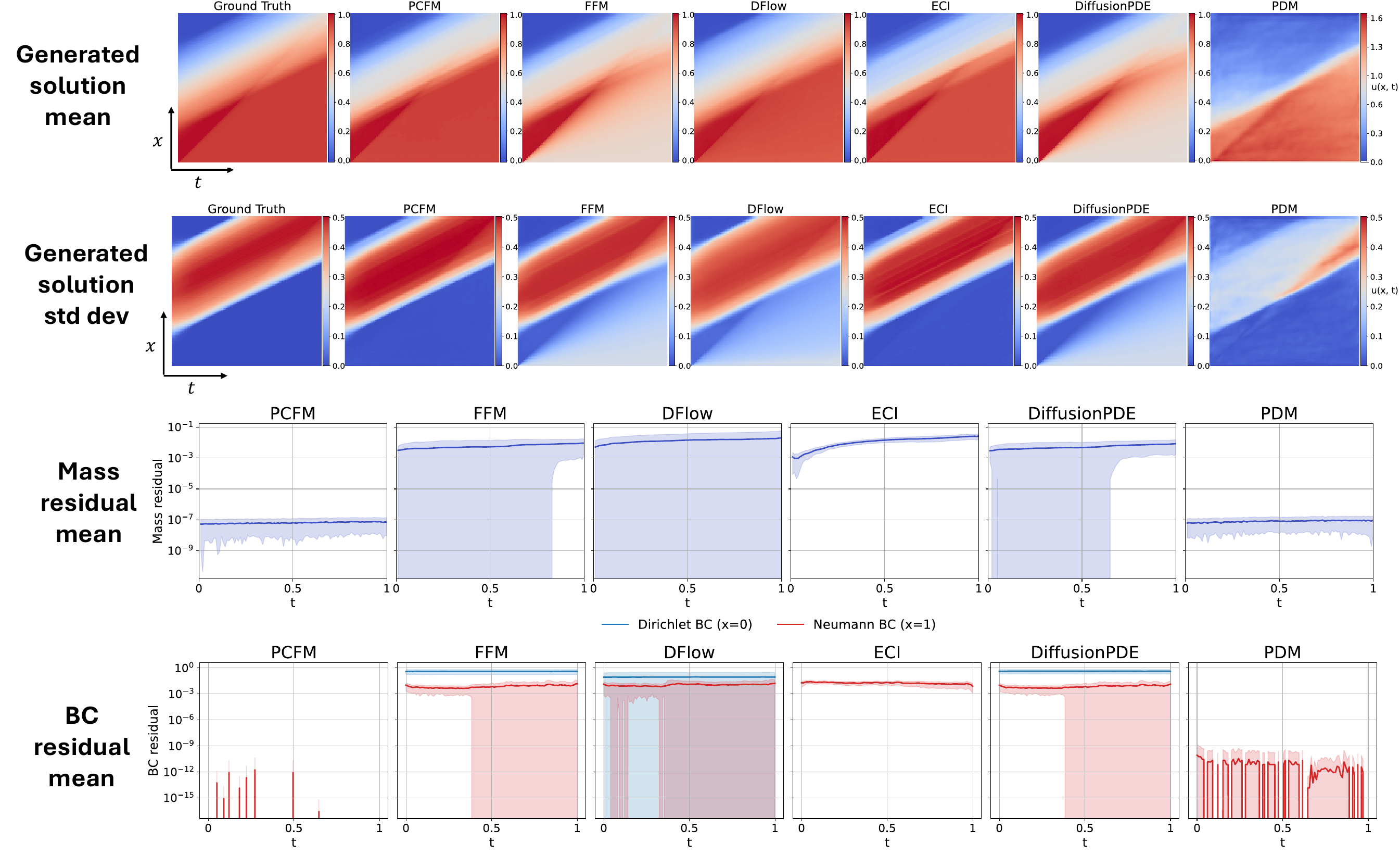}
    \caption{Solution profiles for the Inviscid Burgers equation with fixed BC. We plot the various constraint guidance methods and compare the mean solution profile and standard deviation. While \ourmethod{} yields slightly worse MMSE and SMSE and better FPD, it ensures global mass conservation and maintains low constraint errors for both Dirichlet and Neumann BCs over time.}        \label{fig:burgersBC1}
\end{figure}

\begin{figure}[H]
    \centering
    \includegraphics[width=0.98\linewidth]{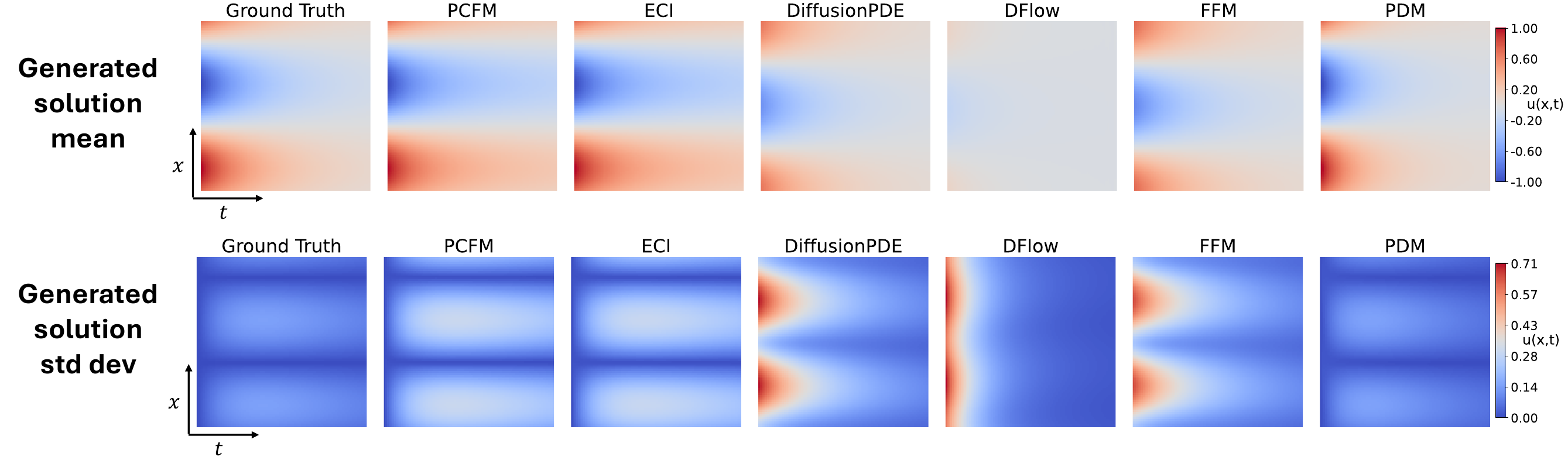}
    \caption{Solution profiles for the Heat equation with fixed IC. We plot the various constraint guidance methods and compare the mean solution profile and standard deviation. \ourmethod{} outperforms all other methods by visually being the most similar to the ground truth.}    
    \label{fig:heat}
\end{figure}

\begin{figure}[H]
    \centering
    \includegraphics[width=0.98\linewidth]{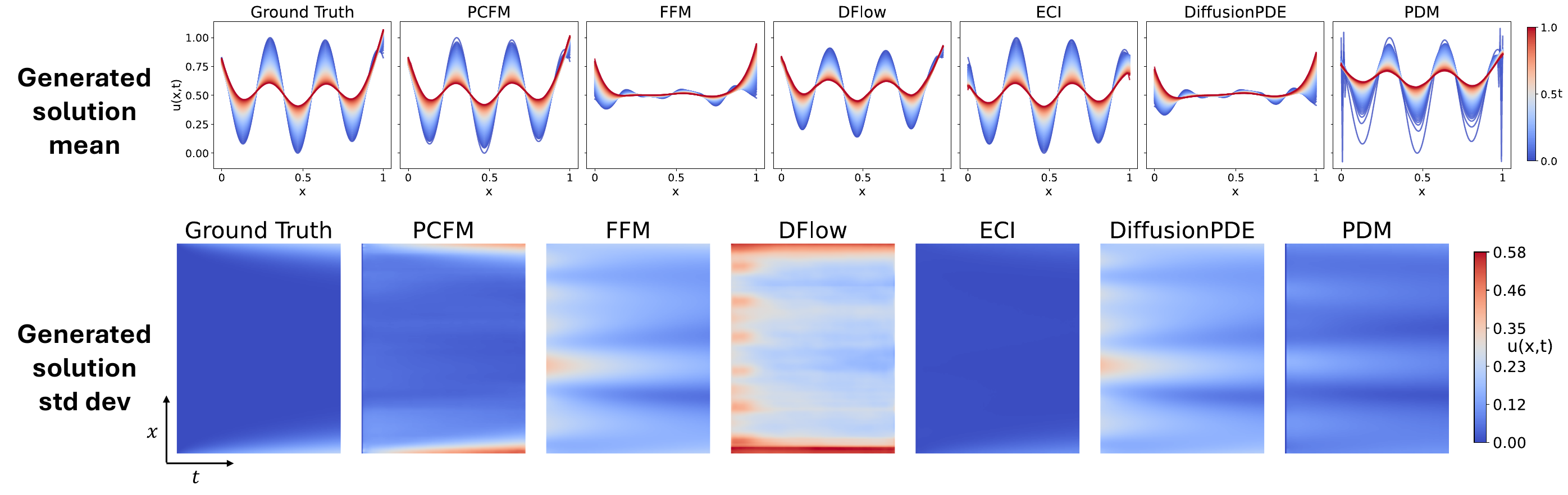}
    \caption{Alternative view of the Reaction-Diffusion equation with fixed IC. We plot the pointwise mean and standard deviation across generated samples for each method. \ourmethod{} outperforms all other methods by visually being the most similar to the ground truth.}  
    \label{fig:RD2}
\end{figure}

\begin{figure}[H]
    \centering
    \includegraphics[width=0.98\linewidth]{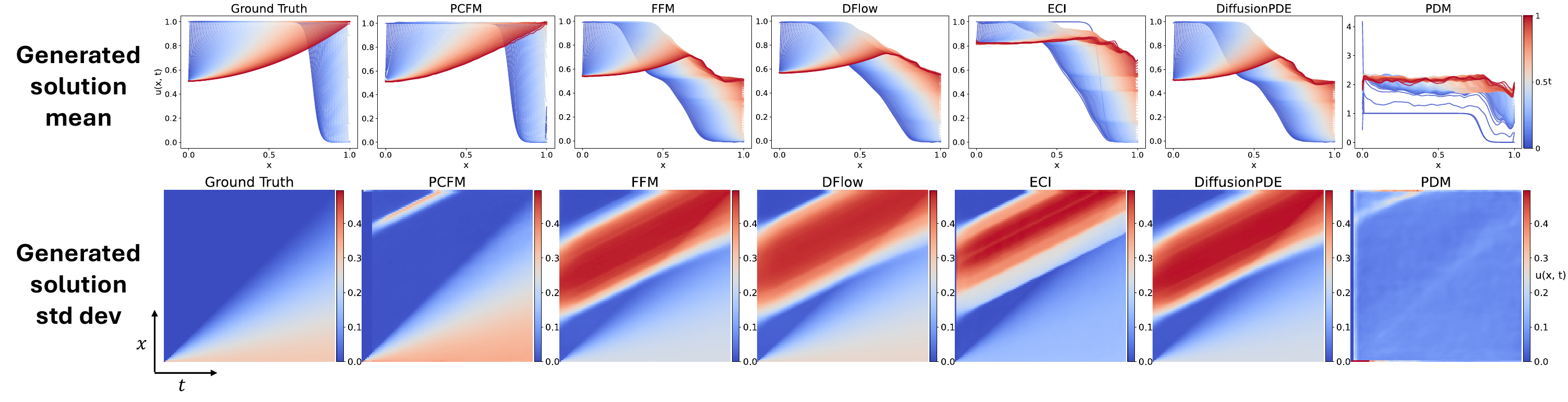}
    \caption{Alternative view of the Inviscid Burgers equation with fixed IC. We plot the pointwise mean and standard deviation across generated samples for each method. \ourmethod{} outperforms all other methods by visually being the most similar to the ground truth.}  
    \label{fig:burgersIC2}
\end{figure}

\section{Further Ablations}

\label{A:more_ablation}

\subsection{Effect of Intermediate Projections} 
\label{subsec:interproject}

Physical constraints such as mass conservation or initial conditions are typically imposed only on the final state. However, we find that applying projections at intermediate steps improves flow consistency and numerical stability. This parallels practices in diffusion and flow-matching models, where intermediate states, often initialized as noise, need not satisfy constraints or carry semantic meaning.

To assess this, we perform an ablation presented in \Cref{tab:intermediate-ablation}, in which projections are applied only at the final step, omitting intermediate corrections. We observe that trajectories drift farther from the constraint manifold, resulting in larger corrective updates, slower convergence, and lower accuracy.

Intermediate projections, by contrast, act as continuous guidance, progressively steering samples toward the feasible manifold while preserving diversity. This avoids over-constraining noisy intermediates and yields more accurate and stable final solutions.

\begin{table}[h!]
\centering
\caption{Ablation study on intermediate projections. Applying projections only at the final step leads to larger constraint deviations and higher errors, while PCFM’s intermediate corrections improve stability and accuracy across all datasets.}
\vspace{0.3em}
\begin{tabular}{@{}lccccc@{}}
\toprule
\textbf{Dataset} & \textbf{Method} & \textbf{MMSE/ $10^{-2}$} & \textbf{SMSE/ $10^{-2}$} & \textbf{CE (IC)} & \textbf{CE (CL)} \\
\midrule
Heat              & Project final only    & 4.31 & 3.27 & \textbf{0} & \textbf{0} \\
                  & PCFM              & \textbf{0.241} & \textbf{0.937} & \textbf{0} & \textbf{0} \\
\midrule
Burgers IC           & Project final only & 10.59 & 6.90 & \textbf{0} & 4.05 \\
                  & PCFM      & \textbf{0.052} & \textbf{0.272} & \textbf{0} & \textbf{0} \\
\midrule
Reaction--Diffusion & Project final only & 2.82 & 2.48 & \textbf{0} & \textbf{0} \\
                    & PCFM   & \textbf{0.026}  & \textbf{0.583} & \textbf{0} & \textbf{0} \\
\midrule
Navier--Stokes    & Project final only  & 16.47 & 8.60 & \textbf{0} & \textbf{0 }\\
                  & PCFM  & \textbf{4.59} & \textbf{4.17} & \textbf{0} & \textbf{0} \\
\bottomrule
\end{tabular}
\label{tab:intermediate-ablation}
\end{table}

\subsection{Total Variation (TV) Constraints on the Heat Equation}

Total Variation Diminishing (TVD) constraints encode the principle that diffusive systems, such as those governed by the heat equation, smooth spatial fluctuations over time. Mathematically, this is expressed by the fact that the total variation of the solution should not increase in time:
\[
\text{TV}(u(t)) := \int \left| \frac{\partial u}{\partial x} \right| dx \quad \text{is non-increasing in } t.
\]

To enforce this property in a data-driven or generative setting, we adopt a hard constraint that relates the total variation at final time \( T \) to the total variation at the initial time \( t = 0 \). Specifically, we impose the condition:
\[
\text{TV}(u_T) = \gamma \cdot \text{TV}(u_0),
\]
where \( \gamma \in (0, 1) \) is a decay factor that can either be fixed (e.g., \( \gamma = 0.5 \)) or estimated from unprojected samples. This constraint can be implemented using discrete spatial differences as:
\[
\text{TV}(u_j) \approx \sum_{i=0}^{n_x-2} \left| u_{i+1, j} - u_{i, j} \right|,
\]
applied at selected time slices (typically \( t=0 \) and \( t=T \)).

We encode the constraint as a differentiable function:
\[
h_{\text{TV}}(u) := \text{TV}(u_T) - \gamma \cdot \text{TV}(u_0) = 0.
\]

This constraint is compatible with mass conservation and initial condition enforcement, and can be combined with other physically informed conditions such as energy decay or curvature regularity. In practice, we observe that enforcing TVD constraints reduces spurious oscillations in the generative output and improves structural fidelity without significantly altering statistical accuracy.

\paragraph{Power MSE.}
To further quantify the effect of this constraint, we introduce the \emph{Power MSE} metric, which measures the deviation in spatial frequency content between the generated and ground truth solutions. Formally, for a model's mean output \( \bar{u}(x,t) \), we compute the spatial power spectrum \( |\hat{u}(k,t)|^2 \) and average over time to obtain a per-frequency energy profile. The Power MSE is then defined as:
\[
\text{Power MSE} := \frac{1}{n_x} \sum_k \left( \overline{|\hat{u}_{\text{gen}}(k)|^2} - \overline{|\hat{u}_{\text{ref}}(k)|^2} \right)^2.
\]
This metric captures discrepancies in frequency modes and is especially sensitive to unphysical sharpness or over-smoothing. Variants of power-spectrum-based error metrics have been used in turbulence modeling \citep{fukami2019super}, climate downscaling \citep{rasp2020weatherbench}, and Fourier neural operators training \citep{liFourier2021}. In our setting, we find that Power MSE effectively complements pointwise metrics (e.g., MMSE, CE), as it evaluates structural fidelity in the spectral domain. As shown in Table~\ref{tab:tvd_heat_equation}, this metric reveals how enforcing a TVD constraint not only improves calibration but also leads to closer alignment in frequency space.

\paragraph{Effect of TVD Constraint.}
To isolate the benefit of total variation control, we mirror the \textbf{ECI} setting by setting the constraint penalty weight to zero and enforcing only initial condition and mass conservation constraints. When we additionally include the TVD constraint ($\gamma = 0.3$) in our \textbf{PCFM} model (with no penalty-based regularization), we observe a consistent improvement in structural fidelity. This is most evident in the Power MSE metric, where PCFM achieves the lowest error among all baselines (488.42 vs.\ 500.12 for ECI), indicating better alignment with the true spatial energy distribution. We also see gains in MMSE and SMSE, without introducing constraint violations (e.g., CE remains zero). This confirms that the TVD constraint acts as a meaningful structural prior even in the absence of learned penalties, improving generative realism without sacrificing calibration or efficiency.

\begin{table*}[t]
\centering
\caption{Test metrics on the Heat Equation dataset with TVD constraints. Lower is better for all metrics.}
\label{tab:tvd_heat_equation}
\begin{tabular}{lccccc}
\toprule
\textbf{Metric} & \textbf{PCFM} & \textbf{ECI} & \textbf{DiffusionPDE} & \textbf{D-Flow} & \textbf{FFM} \\
\midrule
MMSE / $10^{-2}$         & \textbf{0.684} & 0.697 & 4.49 & 1.97 & 4.56 \\
SMSE / $10^{-2}$         & \textbf{0.962} & 0.973 & 3.93 & 1.14 & 3.51 \\
CE (IC) / $10^{-2}$      & \textbf{0}     & \textbf{0}     & 599  & 102  & 579  \\
CE (CL) / $10^{-2}$      & \textbf{0}     & \textbf{0}     & 2.06 & 64.8 & 2.11 \\
FPD                     & \textbf{1.28}  & 1.34  & 1.70 & 2.70 & 1.77 \\
\emph{Power MSE}      & \textbf{488.42} & 500.12 & 1420.92 & 3996.11 & 932.20 \\
\bottomrule
\end{tabular}
\end{table*}

\subsection{Effect of Relaxed Constraint Correction with Flow Matching Steps}
\label{subapp:penalty-ablation}

\begin{figure}[h!]
\centering
\includegraphics[width=0.48\textwidth]{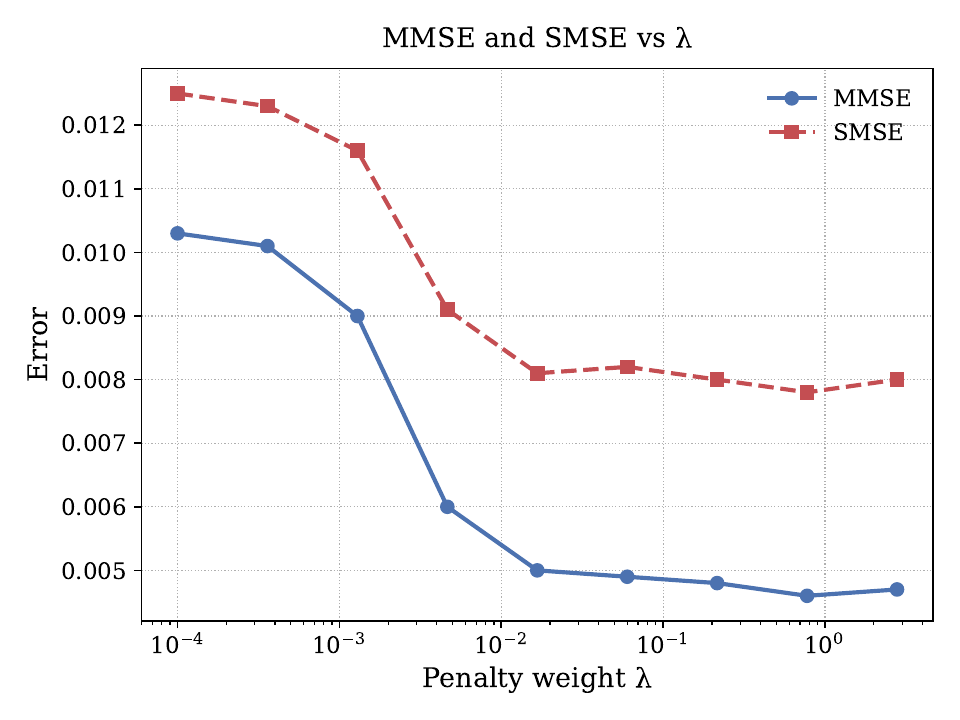}
\includegraphics[width=0.48\textwidth]{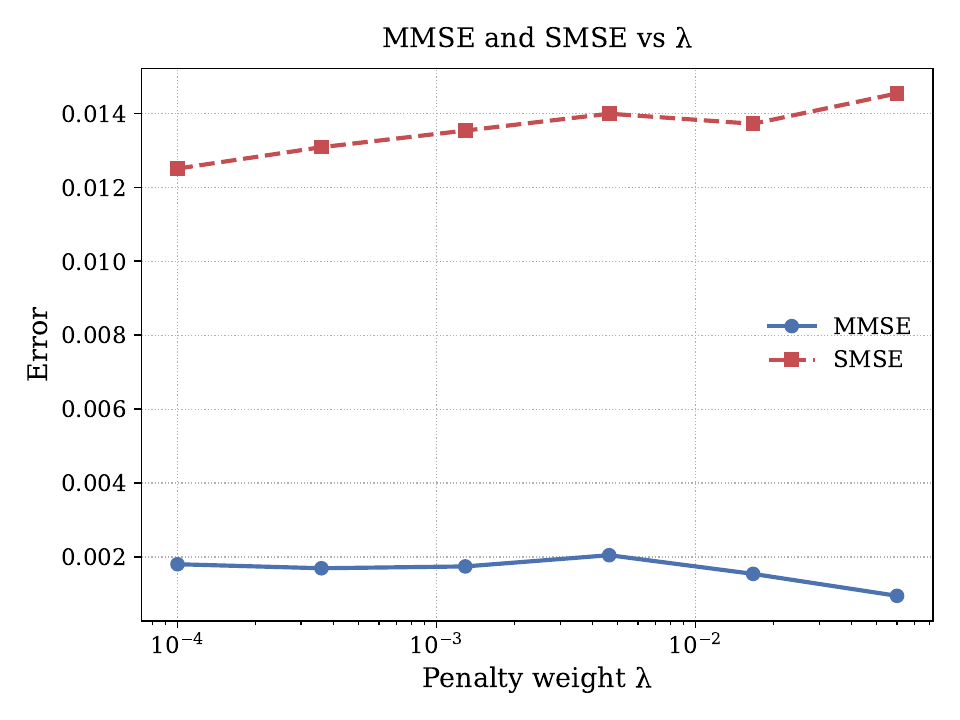}
\caption{Effect of penalty weight \( \lambda \) on MMSE and SMSE for the Reaction-Diffusion dataset. Left: 10 flow matching steps. Right: 100 flow matching steps.}
\label{fig:penalty-flow-ablation}
\end{figure}

Figure~\ref{fig:penalty-flow-ablation} shows the effect of the relaxed constraint penalty weight \( \lambda \) on the generation error, measured via MMSE and SMSE, for the Reaction-Diffusion dataset. We evaluate on a \(128 \times 100\) spatial-temporal grid, as described in \Cref{subsec:rd_dataset}, and solve the relaxed optimization objective in \Cref{eq:cgfm-penalty} using the Adam optimizer.

With only 10 flow steps (left), increasing \( \lambda \) significantly improves performance, as the relaxed constraint correction effectively compensates for the coarser integration of the flow. In contrast, with 100 flow steps (right), the flow is more precise, and additional penalty yields only marginal benefit. This illustrates that relaxed correction is particularly valuable when inference is constrained to a small number of steps—for instance, in scenarios where evaluating the vector field \( v_\theta \) is computationally expensive.

However, overly large values of \( \lambda \) can harm performance by distorting the reverse update and breaking alignment with the OT interpolant (see \Cref{prop:ot-reversibility}), as discussed in \Cref{eq:cgfm-backward}. Hence, choosing \( \lambda \) requires balancing constraint enforcement with consistency along the learned generative path.

\subsection{Evaluation on Generating Out-of-Distribution Samples}

\begin{table*}
\centering
\caption{Performance on out-of-distribution (OOD) dataset for the Burgers equation. PCFM achieves the lowest errors while exactly satisfying both initial-condition (IC) and conservation-law (CL) constraints, demonstrating robust generalization beyond training-distribution constraints.}
\vspace{0.3em}
\begin{tabular}{@{}lcccc@{}}
\toprule
\textbf{Method} & \textbf{MMSE / $10^{-2}$} & \textbf{SMSE / $10^{-2}$} & \textbf{CE (IC)} & \textbf{CE (CL)} \\
\midrule
PCFM          & \textbf{1.83} & \textbf{1.40} & \textbf{0} & \textbf{0} \\
ECI           & 13.30 & 5.65 & \textbf{0} & 3.64 \\
D-Flow        & 15.59 & 7.82 & 6.00 & 0.092 \\
Vanilla       & 18.84 & 7.88 & 6.37 & 0.069 \\
DiffusionPDE  & 19.49 & 8.03 & 6.42 & 0.062 \\
PDM           & 206.6 &  2.81 & \textbf{0} & \textbf{0} \\
\bottomrule
\end{tabular}
\label{tab:ood}
\end{table*}

A key advantage of inference-time constraint enforcement is its ability to generalize beyond the constraints implicitly present in the training data. To assess this capability, we evaluate PCFM under \emph{out-of-distribution} (OOD) constraint settings, where the pretrained model encounters constraint realizations that were never observed during training. Specifically, we design an OOD experiment based on the Burgers equation, focusing on shifts in the initial condition (IC). The pretrained flow-matching model is trained on a standard dataset in which the initial sigmoid profile
\[
u(x,0) = \frac{1}{1 + \exp[-\alpha(x - x_0)]}
\]
is centered at locations $x_0 \in [0.4, 0.6]$. For testing, we construct a new OOD dataset where $x_0$ is sampled from the non-overlapping intervals $x_0 \in [0.0, 0.2]$ and $x_0 \in [0.8, 1.0]$. This shift introduces initial conditions that lie outside the training distribution, thereby evaluating each method’s ability to adapt to unseen constraint configurations.

All models are evaluated in a zero-shot setting, reusing the pretrained FFM backbone without retraining. We employ 200 flow-matching steps for the Burgers problem to ensure stable integration and consistent comparison across methods. Each approach enforces the same IC and conservation constraints, providing a fair assessment of generalization ability.

The results, summarized in Table~\ref{tab:ood}, show that PCFM achieves the lowest mean and sample-wise mean squared errors (MMSE and SMSE) while exactly satisfying both IC and conservation-law constraints, even under this challenging OOD regime. In contrast, unconstrained or partially constrained baselines exhibit significant violations, confirming that inference-time projection enables robust generalization to unseen constraint configurations.

\newpage

\section*{NeurIPS Paper Checklist}

The checklist is designed to encourage best practices for responsible machine learning research, addressing issues of reproducibility, transparency, research ethics, and societal impact. Do not remove the checklist: {\bf The papers not including the checklist will be desk rejected.} The checklist should follow the references and follow the (optional) supplemental material.  The checklist does NOT count towards the page
limit. 

Please read the checklist guidelines carefully for information on how to answer these questions. For each question in the checklist:
\begin{itemize}
    \item You should answer \answerYes{}, \answerNo{}, or \answerNA{}.
    \item \answerNA{} means either that the question is Not Applicable for that particular paper or the relevant information is Not Available.
    \item Please provide a short (1–2 sentence) justification right after your answer (even for NA). 
\end{itemize}

{\bf The checklist answers are an integral part of your paper submission.} They are visible to the reviewers, area chairs, senior area chairs, and ethics reviewers. You will be asked to also include it (after eventual revisions) with the final version of your paper, and its final version will be published with the paper.

The reviewers of your paper will be asked to use the checklist as one of the factors in their evaluation. While "\answerYes{}" is generally preferable to "\answerNo{}", it is perfectly acceptable to answer "\answerNo{}" provided a proper justification is given (e.g., "error bars are not reported because it would be too computationally expensive" or "we were unable to find the license for the dataset we used"). In general, answering "\answerNo{}" or "\answerNA{}" is not grounds for rejection. While the questions are phrased in a binary way, we acknowledge that the true answer is often more nuanced, so please just use your best judgment and write a justification to elaborate. All supporting evidence can appear either in the main paper or the supplemental material, provided in appendix. If you answer \answerYes{} to a question, in the justification please point to the section(s) where related material for the question can be found.



\begin{enumerate}

\item {\bf Claims}
    \item[] Question: Do the main claims made in the abstract and introduction accurately reflect the paper's contributions and scope?
    \item[] Answer: \answerYes{} 
    \item[] Justification: The authors study the problem of imposing arbitrary nonlinear hard constraints in flow-based generative models in a zero-shot manner. The authors list their contributions appropriately in the abstract and introduction.
    \item[] Guidelines:
    \begin{itemize}
        \item The answer NA means that the abstract and introduction do not include the claims made in the paper.
        \item The abstract and/or introduction should clearly state the claims made, including the contributions made in the paper and important assumptions and limitations. A No or NA answer to this question will not be perceived well by the reviewers. 
        \item The claims made should match theoretical and experimental results, and reflect how much the results can be expected to generalize to other settings. 
        \item It is fine to include aspirational goals as motivation as long as it is clear that these goals are not attained by the paper. 
    \end{itemize}

\item {\bf Limitations}
    \item[] Question: Does the paper discuss the limitations of the work performed by the authors?
    \item[] Answer: \answerYes{} 
    \item[] Justification: The authors list the limitations of their work in the Conclusion section.
    \item[] Guidelines:
    \begin{itemize}
        \item The answer NA means that the paper has no limitation while the answer No means that the paper has limitations, but those are not discussed in the paper. 
        \item The authors are encouraged to create a separate "Limitations" section in their paper.
        \item The paper should point out any strong assumptions and how robust the results are to violations of these assumptions (e.g., independence assumptions, noiseless settings, model well-specification, asymptotic approximations only holding locally). The authors should reflect on how these assumptions might be violated in practice and what the implications would be.
        \item The authors should reflect on the scope of the claims made, e.g., if the approach was only tested on a few datasets or with a few runs. In general, empirical results often depend on implicit assumptions, which should be articulated.
        \item The authors should reflect on the factors that influence the performance of the approach. For example, a facial recognition algorithm may perform poorly when image resolution is low or images are taken in low lighting. Or a speech-to-text system might not be used reliably to provide closed captions for online lectures because it fails to handle technical jargon.
        \item The authors should discuss the computational efficiency of the proposed algorithms and how they scale with dataset size.
        \item If applicable, the authors should discuss possible limitations of their approach to address problems of privacy and fairness.
        \item While the authors might fear that complete honesty about limitations might be used by reviewers as grounds for rejection, a worse outcome might be that reviewers discover limitations that aren't acknowledged in the paper. The authors should use their best judgment and recognize that individual actions in favor of transparency play an important role in developing norms that preserve the integrity of the community. Reviewers will be specifically instructed to not penalize honesty concerning limitations.
    \end{itemize}

\item {\bf Theory assumptions and proofs}
    \item[] Question: For each theoretical result, does the paper provide the full set of assumptions and a complete (and correct) proof?
    \item[] Answer: \answerNA{} 
    \item[] Justification: The paper does not include any major theoretical result.
    \item[] Guidelines: 
    \begin{itemize}
        \item The answer NA means that the paper does not include theoretical results. 
        \item All the theorems, formulas, and proofs in the paper should be numbered and cross-referenced.
        \item All assumptions should be clearly stated or referenced in the statement of any theorems.
        \item The proofs can either appear in the main paper or the supplemental material, but if they appear in the supplemental material, the authors are encouraged to provide a short proof sketch to provide intuition. 
        \item Inversely, any informal proof provided in the core of the paper should be complemented by formal proofs provided in appendix or supplemental material.
        \item Theorems and Lemmas that the proof relies upon should be properly referenced. 
    \end{itemize}

    \item {\bf Experimental result reproducibility}
    \item[] Question: Does the paper fully disclose all the information needed to reproduce the main experimental results of the paper to the extent that it affects the main claims and/or conclusions of the paper (regardless of whether the code and data are provided or not)?
    \item[] Answer: \answerYes{} 
    \item[] Justification: The authors provide the details of their baselines (\Cref{A:sampling}), test setup (\Cref{A:PDEdata}, \Cref{A:modeltraining}), and evaluation metrics (\Cref{app:metrics}).
    \item[] Guidelines:
    \begin{itemize}
        \item The answer NA means that the paper does not include experiments.
        \item If the paper includes experiments, a No answer to this question will not be perceived well by the reviewers: Making the paper reproducible is important, regardless of whether the code and data are provided or not.
        \item If the contribution is a dataset and/or model, the authors should describe the steps taken to make their results reproducible or verifiable. 
        \item Depending on the contribution, reproducibility can be accomplished in various ways. For example, if the contribution is a novel architecture, describing the architecture fully might suffice, or if the contribution is a specific model and empirical evaluation, it may be necessary to either make it possible for others to replicate the model with the same dataset, or provide access to the model. In general. releasing code and data is often one good way to accomplish this, but reproducibility can also be provided via detailed instructions for how to replicate the results, access to a hosted model (e.g., in the case of a large language model), releasing of a model checkpoint, or other means that are appropriate to the research performed.
        \item While NeurIPS does not require releasing code, the conference does require all submissions to provide some reasonable avenue for reproducibility, which may depend on the nature of the contribution. For example
        \begin{enumerate}
            \item If the contribution is primarily a new algorithm, the paper should make it clear how to reproduce that algorithm.
            \item If the contribution is primarily a new model architecture, the paper should describe the architecture clearly and fully.
            \item If the contribution is a new model (e.g., a large language model), then there should either be a way to access this model for reproducing the results or a way to reproduce the model (e.g., with an open-source dataset or instructions for how to construct the dataset).
            \item We recognize that reproducibility may be tricky in some cases, in which case authors are welcome to describe the particular way they provide for reproducibility. In the case of closed-source models, it may be that access to the model is limited in some way (e.g., to registered users), but it should be possible for other researchers to have some path to reproducing or verifying the results.
        \end{enumerate}
    \end{itemize}

\item {\bf Open access to data and code}
    \item[] Question: Does the paper provide open access to the data and code, with sufficient instructions to faithfully reproduce the main experimental results, as described in supplemental material?
    \item[] Answer: \answerYes{} 
    \item[] Justification: The paper provides access to the data and code, with details of reproducibility in \Cref{A:modeltraining}.
    \item[] Guidelines:
    \begin{itemize}
        \item The answer NA means that paper does not include experiments requiring code.
        \item Please see the NeurIPS code and data submission guidelines (\url{https://nips.cc/public/guides/CodeSubmissionPolicy}) for more details.
        \item While we encourage the release of code and data, we understand that this might not be possible, so “No” is an acceptable answer. Papers cannot be rejected simply for not including code, unless this is central to the contribution (e.g., for a new open-source benchmark).
        \item The instructions should contain the exact command and environment needed to run to reproduce the results. See the NeurIPS code and data submission guidelines (\url{https://nips.cc/public/guides/CodeSubmissionPolicy}) for more details.
        \item The authors should provide instructions on data access and preparation, including how to access the raw data, preprocessed data, intermediate data, and generated data, etc.
        \item The authors should provide scripts to reproduce all experimental results for the new proposed method and baselines. If only a subset of experiments are reproducible, they should state which ones are omitted from the script and why.
        \item At submission time, to preserve anonymity, the authors should release anonymized versions (if applicable).
        \item Providing as much information as possible in supplemental material (appended to the paper) is recommended, but including URLs to data and code is permitted.
    \end{itemize}

\item {\bf Experimental setting/details}
    \item[] Question: Does the paper specify all the training and test details (e.g., data splits, hyperparameters, how they were chosen, type of optimizer, etc.) necessary to understand the results?
    \item[] Answer: \answerYes{} 
    \item[] Justification: The whole training and test details are described in \Cref{A:sampling}, \Cref{A:PDEdata}, \Cref{A:modeltraining}, and \Cref{app:metrics}.
    \item[] Guidelines:
    \begin{itemize}
        \item The answer NA means that the paper does not include experiments.
        \item The experimental setting should be presented in the core of the paper to a level of detail that is necessary to appreciate the results and make sense of them.
        \item The full details can be provided either with the code, in appendix, or as supplemental material.
    \end{itemize}

\item {\bf Experiment statistical significance}
    \item[] Question: Does the paper report error bars suitably and correctly defined or other appropriate information about the statistical significance of the experiments?
    \item[] Answer: \answerYes{} 
    \item[] Justification: We report metrics such as Standard Mean Square Error (SMSE) and variance in the mass residual mean plots in \Cref{fig:rd-results} and \Cref{fig:burgers-results}. 
    \item[] Guidelines:
    \begin{itemize}
        \item The answer NA means that the paper does not include experiments.
        \item The authors should answer "Yes" if the results are accompanied by error bars, confidence intervals, or statistical significance tests, at least for the experiments that support the main claims of the paper.
        \item The factors of variability that the error bars are capturing should be clearly stated (for example, train/test split, initialization, random drawing of some parameter, or overall run with given experimental conditions).
        \item The method for calculating the error bars should be explained (closed form formula, call to a library function, bootstrap, etc.)
        \item The assumptions made should be given (e.g., Normally distributed errors).
        \item It should be clear whether the error bar is the standard deviation or the standard error of the mean.
        \item It is OK to report 1-sigma error bars, but one should state it. The authors should preferably report a 2-sigma error bar than state that they have a 96\% CI, if the hypothesis of Normality of errors is not verified.
        \item For asymmetric distributions, the authors should be careful not to show in tables or figures symmetric error bars that would yield results that are out of range (e.g. negative error rates).
        \item If error bars are reported in tables or plots, The authors should explain in the text how they were calculated and reference the corresponding figures or tables in the text.
    \end{itemize}

\item {\bf Experiments compute resources}
    \item[] Question: For each experiment, does the paper provide sufficient information on the computer resources (type of compute workers, memory, time of execution) needed to reproduce the experiments?
    \item[] Answer: \answerYes{} 
    \item[] Justification: The compute resources are listed in \Cref{A:modeltraining}.
    \item[] Guidelines:
    \begin{itemize}
        \item The answer NA means that the paper does not include experiments.
        \item The paper should indicate the type of compute workers CPU or GPU, internal cluster, or cloud provider, including relevant memory and storage.
        \item The paper should provide the amount of compute required for each of the individual experimental runs as well as estimate the total compute. 
        \item The paper should disclose whether the full research project required more compute than the experiments reported in the paper (e.g., preliminary or failed experiments that didn't make it into the paper). 
    \end{itemize}
    
\item {\bf Code of ethics}
    \item[] Question: Does the research conducted in the paper conform, in every respect, with the NeurIPS Code of Ethics \url{https://neurips.cc/public/EthicsGuidelines}?
    \item[] Answer: \answerYes{} 
    \item[] Justification: The authors confirm that the research conducted in the paper conforms, in every respect, with the NeurIPS Code of Ethics.
    \item[] Guidelines:
    \begin{itemize}
        \item The answer NA means that the authors have not reviewed the NeurIPS Code of Ethics.
        \item If the authors answer No, they should explain the special circumstances that require a deviation from the Code of Ethics.
        \item The authors should make sure to preserve anonymity (e.g., if there is a special consideration due to laws or regulations in their jurisdiction).
    \end{itemize}

\item {\bf Broader impacts}
    \item[] Question: Does the paper discuss both potential positive societal impacts and negative societal impacts of the work performed?
    \item[] Answer: \answerYes{} 
    \item[] Justification: We discuss the broader impact of our work in the conclusion section.
    \item[] Guidelines:
    \begin{itemize}
        \item The answer NA means that there is no societal impact of the work performed.
        \item If the authors answer NA or No, they should explain why their work has no societal impact or why the paper does not address societal impact.
        \item Examples of negative societal impacts include potential malicious or unintended uses (e.g., disinformation, generating fake profiles, surveillance), fairness considerations (e.g., deployment of technologies that could make decisions that unfairly impact specific groups), privacy considerations, and security considerations.
        \item The conference expects that many papers will be foundational research and not tied to particular applications, let alone deployments. However, if there is a direct path to any negative applications, the authors should point it out. For example, it is legitimate to point out that an improvement in the quality of generative models could be used to generate deepfakes for disinformation. On the other hand, it is not needed to point out that a generic algorithm for optimizing neural networks could enable people to train models that generate Deepfakes faster.
        \item The authors should consider possible harms that could arise when the technology is being used as intended and functioning correctly, harms that could arise when the technology is being used as intended but gives incorrect results, and harms following from (intentional or unintentional) misuse of the technology.
        \item If there are negative societal impacts, the authors could also discuss possible mitigation strategies (e.g., gated release of models, providing defenses in addition to attacks, mechanisms for monitoring misuse, mechanisms to monitor how a system learns from feedback over time, improving the efficiency and accessibility of ML).
    \end{itemize}
    
\item {\bf Safeguards}
    \item[] Question: Does the paper describe safeguards that have been put in place for responsible release of data or models that have a high risk for misuse (e.g., pretrained language models, image generators, or scraped datasets)?
    \item[] Answer: \answerNA{} 
    \item[] Justification: The paper poses no such risks.
    \item[] Guidelines:
    \begin{itemize}
        \item The answer NA means that the paper poses no such risks.
        \item Released models that have a high risk for misuse or dual-use should be released with necessary safeguards to allow for controlled use of the model, for example by requiring that users adhere to usage guidelines or restrictions to access the model or implementing safety filters. 
        \item Datasets that have been scraped from the Internet could pose safety risks. The authors should describe how they avoided releasing unsafe images.
        \item We recognize that providing effective safeguards is challenging, and many papers do not require this, but we encourage authors to take this into account and make a best faith effort.
    \end{itemize}

\item {\bf Licenses for existing assets}
    \item[] Question: Are the creators or original owners of assets (e.g., code, data, models), used in the paper, properly credited and are the license and terms of use explicitly mentioned and properly respected?
    \item[] Answer: \answerYes{}{} 
    \item[] Justification: The authors cite the original paper that produced the code package or dataset with proper attribution.
    \item[] Guidelines:
    \begin{itemize}
        \item The answer NA means that the paper does not use existing assets.
        \item The authors should cite the original paper that produced the code package or dataset.
        \item The authors should state which version of the asset is used and, if possible, include a URL.
        \item The name of the license (e.g., CC-BY 4.0) should be included for each asset.
        \item For scraped data from a particular source (e.g., website), the copyright and terms of service of that source should be provided.
        \item If assets are released, the license, copyright information, and terms of use in the package should be provided. For popular datasets, \url{paperswithcode.com/datasets} has curated licenses for some datasets. Their licensing guide can help determine the license of a dataset.
        \item For existing datasets that are re-packaged, both the original license and the license of the derived asset (if it has changed) should be provided.
        \item If this information is not available online, the authors are encouraged to reach out to the asset's creators.
    \end{itemize}

\item {\bf New assets}
    \item[] Question: Are new assets introduced in the paper well documented and is the documentation provided alongside the assets?
    \item[] Answer: \answerNA{}{} 
    \item[] Justification: The paper does not release any new assets.
    \item[] Guidelines:
    \begin{itemize}
        \item The answer NA means that the paper does not release new assets.
        \item Researchers should communicate the details of the dataset/code/model as part of their submissions via structured templates. This includes details about training, license, limitations, etc. 
        \item The paper should discuss whether and how consent was obtained from people whose asset is used.
        \item At submission time, remember to anonymize your assets (if applicable). You can either create an anonymized URL or include an anonymized zip file.
    \end{itemize}

\item {\bf Crowdsourcing and research with human subjects}
    \item[] Question: For crowdsourcing experiments and research with human subjects, does the paper include the full text of instructions given to participants and screenshots, if applicable, as well as details about compensation (if any)? 
    \item[] Answer: \answerNA{} 
    \item[] Justification: The paper does not involve crowdsourcing nor research with human subjects.
    \item[] Guidelines:
    \begin{itemize}
        \item The answer NA means that the paper does not involve crowdsourcing nor research with human subjects.
        \item Including this information in the supplemental material is fine, but if the main contribution of the paper involves human subjects, then as much detail as possible should be included in the main paper. 
        \item According to the NeurIPS Code of Ethics, workers involved in data collection, curation, or other labor should be paid at least the minimum wage in the country of the data collector. 
    \end{itemize}

\item {\bf Institutional review board (IRB) approvals or equivalent for research with human subjects}
    \item[] Question: Does the paper describe potential risks incurred by study participants, whether such risks were disclosed to the subjects, and whether Institutional Review Board (IRB) approvals (or an equivalent approval/review based on the requirements of your country or institution) were obtained?
    \item[] Answer: \answerNA{} 
    \item[] Justification: The research mentioned in the paper does not involve crowdsourcing nor research with human subjects.
    \item[] Guidelines:
    \begin{itemize}
        \item The answer NA means that the paper does not involve crowdsourcing nor research with human subjects.
        \item Depending on the country in which research is conducted, IRB approval (or equivalent) may be required for any human subjects research. If you obtained IRB approval, you should clearly state this in the paper. 
        \item We recognize that the procedures for this may vary significantly between institutions and locations, and we expect authors to adhere to the NeurIPS Code of Ethics and the guidelines for their institution. 
        \item For initial submissions, do not include any information that would break anonymity (if applicable), such as the institution conducting the review.
    \end{itemize}

\item {\bf Declaration of LLM usage}
    \item[] Question: Does the paper describe the usage of LLMs if it is an important, original, or non-standard component of the core methods in this research? Note that if the LLM is used only for writing, editing, or formatting purposes and does not impact the core methodology, scientific rigorousness, or originality of the research, declaration is not required.
    \item[] Answer: \answerNA{} 
    \item[] Justification: We only use LLMs for writing, editing, or formatting purposes.
    \item[] Guidelines:
    \begin{itemize}
        \item The answer NA means that the core method development in this research does not involve LLMs as any important, original, or non-standard components.
        \item Please refer to our LLM policy (\url{https://neurips.cc/Conferences/2025/LLM}) for what should or should not be described.
    \end{itemize}

\end{enumerate}

\end{document}